\tikzstyle{line} = [draw, -latex']
\newtheorem{example}{Example}
\newtheorem{definition}{Definition}
\newtheorem{proposition}{Proposition}
\newcommand{\qw}[1] {{\tt {#1}}}
\newcommand{\xa}{\qw{A}}
\newcommand{\xb}{\qw{B}}
\newcommand{\xc}{\qw{C}}
\newcommand{\xd}{\qw{D}}
\newcommand{\xe}{\qw{E}}
\newcommand{\arcs}{{\sf Arcs}}
\newcommand{\nodes}{{\sf Nodes}}
\newcommand{\args}{{\sf Args}}  
\newcommand{\codomain}{{\sf Codomain}}
\newcommand{\inform}{{\sf Inform}}
\newcommand{\graph}{{\cal G}}
\newcommand{\lab}{{\cal L}}
\newcommand{\inst}{{\cal I}}
\newcommand{\igraph}{{\cal X}}
\newcommand{\support}{{\sf S}}
\newcommand{\claim}{{\sf C}}
\begin{document}

\title{\noindent\rule{\textwidth}{1.5pt}\\
Some Options for Instantiation of Bipolar Argument Graphs\\ 
with Deductive Arguments\\
\noindent\rule{\textwidth}{1.5pt}}

\author{
Anthony Hunter\\
Department of Computer Science\\ University College London\\ 
London, UK\\
({\tt anthony.hunter@ucl.ac.uk})}

\maketitle


\begin{abstract}
Argument graphs provide an abstract representation of an argumentative situation. A bipolar argument graph is a directed graph  where each node  denotes an argument, and each arc denotes the influence of one argument on another. Here we assume that the influence is supporting, attacking, or ambiguous. 
In a bipolar argument graph, each argument is atomic and so it has no internal structure. Yet to better understand the nature of the individual arguments, and how they interact, it is important to consider their internal structure.
To address this need, this paper presents a framework based on the use of logical arguments to instantiate bipolar argument graphs, and a set of possible constraints on instantiating arguments that take into account the internal structure of the arguments, and the types of relationship between arguments. 
\end{abstract}

\section{Introduction}

Bipolar argumentation is a generalization of abstract argumentation that incorporates a support relation in addition to the attack relation \cite{CayrolLS05,CayrolLS05b,Amgoud2008,CayrolLS13}. 
A bipolar argument graph is a directed graph where each node  denotes an argument, and each arc denotes the influence of one argument on another.
The label denotes the type of influence with options including positive (supporting) and negative (attacking).
In order to determine the acceptable arguments from a biploar argument graph, dialectical semantics can be generalized to handle both support and attack relations \cite{CayrolLS13}. Alternatively, the approach of abstract dialectical frameworks \cite{Brewka2010,Brewka2014}, gradual semantics \cite{Amgoud2013,AmgoudBNDV17,Bonzon16,CayrolLS05b,LeiteMartins11,Rago16,Costa-Pereira:2011,BRTAB15,Potyka18,Pu2014,PuLZL15,Baroni2019}, or epistemic graphs \cite{Hunter2020} can be used.

An issue with bipolar argument graphs is that each node is an abstract argument, and so the meaning of the argument is unspecified. To address this issue, we investigate how nodes can be instantiated with deductive arguments. By doing this, we can systematically investigate the nature of the contents (premises and claim) and their interplay with the structure of the graph.  
But this then raises questions about what kinds of instantiations are appropriate and what they  mean. It also raises questions about how we can compare instantiated bipolar argument graphs to show, for instance, that two instantiations are equivalent, and how we can investigate the interplay of the premises of an argument and the claims of the arguments that support or attack it.

So the aim of this paper is to provide a framework for instantiating bipolar argument graphs. We will consider a range of constraints that capture potentially important restrictions that we may wish to impose on instantiations. Using these constraints, we can then investigate various options for instantiating bipolar argument graphs. 

We assume that we start with a bipolar argument graph (for example, it might have been given to us, 
or we might have generated it using some argument mining methods), 
and we want to better understand what the arguments in the graph mean. By instantiating the abstract arguments, we make explicit what the premises and claims are, and how each argument relates to the others. It is a form of commitment. As part of this process, we would be constructing the premises that are used in the arguments, and so we would be constructing the knowledgebase on which the arguments are based.

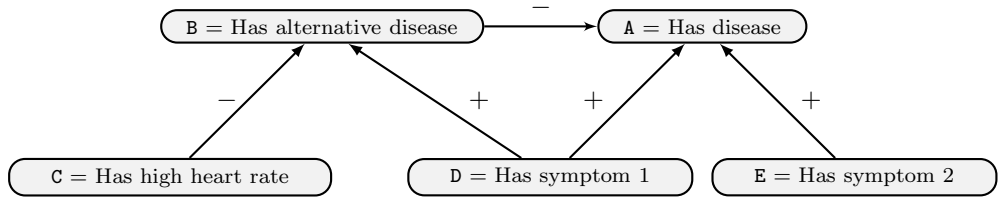
\begin{figure}[t]
\centering
\begin{tikzpicture}
[->,>=latex,thick, arg/.style={draw,text centered, 
shape=rectangle, rounded corners=6pt,
fill=gray!10,font=\footnotesize}]
\node[arg] (a) [text width=25mm] at (8,2) {$\xa$ = Has disease};
\node[arg] (b) [text width=40mm] at (3,2)  {$\xb$ = Has alternative disease};
\node[arg] (d) [text width=35mm] at (6,0) {$\xd$ = Has symptom 1};
\node[arg] (e) [text width=35mm] at (10,0) {$\xe$ = Has symptom 2};
\node[arg] (c) [text width=40mm] at (1,0)  {$\xc$ = Has high heart rate };
\path[line]	(b) edge[] node[above] {$-$} (a);
\path	(d) edge node[left,xshift=-5pt] {$+$} (a);
\path	(e) edge node[right,xshift=5pt] {$+$} (a);
\path	(c) edge node[left] {$-$} (b);
\path	(d) edge node[right,xshift=9pt] {$+$} (b);
\end{tikzpicture}
\caption{\label{fig:epilogic1}Example of a bipolar argument graph concerning diagnosis of a disease based on belief in symptoms and a differential diagnosis which in turn is based on whether the patient has a high heart rate. 
Each argument is a claim with implicit support. 
The $+$ (respectively $-$) label denotes support (respectively attack) relations. 
}
\end{figure}

We proceed as follows:
In Section \ref{section:deductive}, we review deductive argumentation;
In Section \ref{section:instantiation}, we define instantiations of bipolar argument graphs, and illustrate with motivating examples; 
In Section \ref{section:constraints}, we consider a set of constraints that specify desirable properties for well-behaved instantiations of bipolar argument graphs;
And in Section \ref{section:discussion}, we discuss this proposal and some possibilities for future work.

\section{Deductive argumentation}
\label{section:deductive}

We briefly review deductive argumentation \cite{Cayrol1995,Besnard2001,Besnard2008,Gorogiannis2011}.
We consider a classical propositional or first-order language with the classical consequence relation denoted by the $\vdash$ relation.
We use $\alpha, \beta, \gamma, \ldots$ to denote formulae and 
$\Delta, \Phi, \Psi, \ldots$ to denote sets of formulae.
For the following definitions, we first assume a knowledgebase $\Delta$ 
(a finite set of formulae) and use this $\Delta$ throughout for the knowledge for instantiating arguments.

For a set of formulae $\Phi$, let ${\sf Cn}(\Phi)$ be the {\bf consequence closure} of $\Phi$
(i.e. ${\sf Cn}(\Phi)$ = $\{ \psi \mid \Phi\vdash\psi \}$). 
Sets of formulae $\Phi$ and $\Psi$ are {\bf equivalent sets of formulae},
denoted $\Phi\equiv\Psi$,
iff ${\sf Cn}(\Phi)$ = ${\sf Cn}(\Psi)$.
Formulae $\phi$ and $\psi$ are {\bf equivalent formulae},
denoted $\phi\equiv\psi$,
iff ${\sf Cn}(\{\phi\})$ = ${\sf Cn}(\{\psi\})$.

\begin{figure}[t]
\centering
\begin{tikzpicture}
[->,>=latex,thick, arg/.style={draw,align=left,
shape=rectangle, rounded corners=6pt,
fill=gray!10,font=\footnotesize}]
\node[arg] (a) [text width=100mm] at (0,7) 
{
$\inst(\xa)$\\ 
$\tt has\_symptom1(continuous,noticable))$,\\ 
$\tt \neg has\_alternative\_disease$,\\
$\tt (has\_symptom1(frequent,pronounced) \vee has\_symptom1(continuous,noticable))$\\ 
$\tt \hspace{2cm} \wedge has\_symptom2 \wedge \neg has\_alternative\_disease$\\
$\tt \hspace{2cm} \rightarrow has\_disease$\\
.\\
\hrule 
$\tt  HasDisease$
};
\node[arg] (b) [text width=100mm] at (0,1)  
{
$\inst(\xb)$\\ 
$\tt \neg high\_rate(high,chronic)$,\\
$\tt high\_rate(high,chronic) \vee high\_rate(high,occasional)$,\\
$\tt has\_symptom1(continuous,noticable)$, \\
$\tt heart\_rate(high,occasional) \wedge has\_symptom1(continuous,noticable)$\\ 
$\tt \hspace{2cm} \rightarrow has\_alternative\_disease$\\
.\\
\hrule 
$\tt has\_alternativeDisease$
};
\node[arg] (d) [text width=50mm] at (6,4) 
{$\inst(\xd)$\\
$\tt has\_symptom1(continuous,noticable)$\\
.\\
\hrule
$\tt has\_symptom1(continuous,noticable)$\\
};
\node[arg] (e) [text width=25mm] at (-4,4) 
{$\inst(\xe)$\\
$\tt has\_symptom2$\\
.\\
\hrule
$\tt has\_symptom2$\\
};
\node[arg] (c) [text width=50mm] at (0,-2)  
{$\inst(\xc)$\\ 
$\tt heart\_rate(high,chronic)$\\
.\\
\hrule
$\tt heart\_rate(high,chronic)$\\
};
\path[line]	(b) edge[] node[left] {$-$} (a);
\draw[->] (d.north) -- (6,7) -- node[above] {$+$}(a.east);
\draw[->] (e.west) -- (-6,4) -- (-6,7) -- node[above] {$+$}(a.west);
\path	(c) edge node[left] {$-$} (b);
\draw[->] (d.south) -- (6,1) -- node[below] {$+$}(b.east);
\end{tikzpicture}
\caption{\label{fig:instantiation}
An instantiation of the  graph given in Figure \ref{fig:epilogic1}.
Each deductive argument is presented as a set of premises above the line, and the claim below the line.
}
\end{figure}
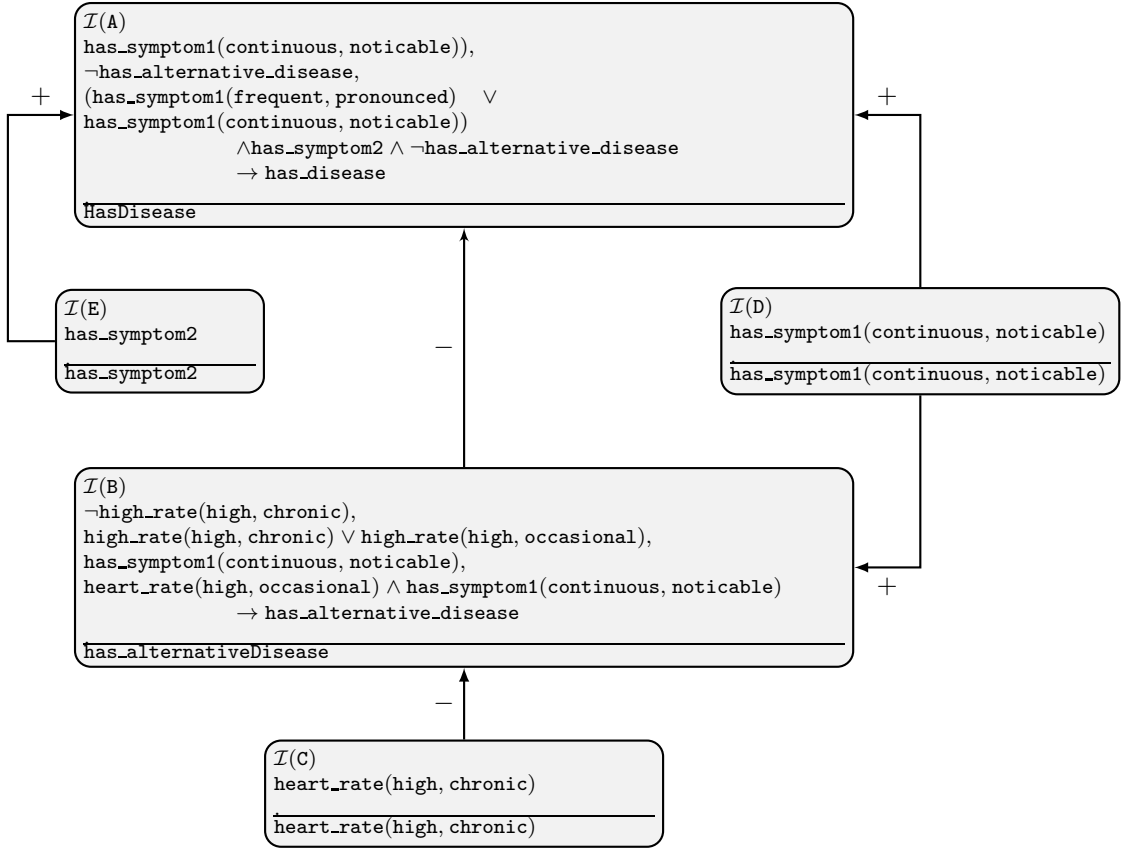

In deductive argumentation, a {\bf deductive argument} is a pair $\langle\Phi,\alpha\rangle$ 
where $\Phi \subseteq {\cal L}$ is a minimal set such that $\Phi$ is consistent and $\Phi$ entails the claim $\alpha$ 
(i.e. $\Phi\vdash\alpha$, $\Phi\not\vdash\bot$, and there is no $\Phi'\subset\Phi$ such that $\Phi'\vdash\alpha$)  \cite{Cayrol1995}.
For a deductive argument $\langle\Phi,\alpha\rangle$,
$\Phi$ is the {\bf support}, or {\bf premises}, of the argument,
and $\alpha$ is the {\bf claim} of the argument.
Also, for $\langle\Phi,\alpha\rangle$,
let $\support(\langle\Phi,\alpha\rangle) = \Phi$, 
and let $\claim(\langle\Phi,\alpha\rangle) = \alpha$.
For arguments $\langle\Phi,\alpha\rangle$
and $\langle\Psi,\beta\rangle$,
they are {\bf equivalent arguments}, denoted $\langle\Phi,\alpha\rangle\equiv\langle\Psi,\beta\rangle$,
iff $\Phi\equiv\Psi$ and $\alpha\equiv\beta$.

We use $A$, $B$, $C$, $\ldots$ to denote abstract arguments,
and we use $I$, $J$, $K$, $\ldots$ to denote deductive arguments.
We use teletype font for these symbols in the examples.
Also, we use lower case letters for propositional and predicate formulae in the examples.

We have a range of options for the definition of {\bf counterargument} (taken from \cite{Besnard2001,Gorogiannis2011}) including the following
where $I$ and $J$ are deductive arguments: 
$I$ is a {\bf defeater} of $J$
if ${\claim}(I) \vdash\neg \bigwedge {\support}(J)$; 
$I$ is a {\bf undercut} of $J$ if there exists a
$\Psi \subseteq {\support}(J)$ 
s.t. ${\claim}(I) \equiv \neg \bigwedge \Psi$; 
$A$ is a {\bf direct undercut} of $J$ if there exists a
$\alpha \in {\support}(J)$ s.t. ${\claim}(I) \equiv \neg \alpha$; 
$A$ is a {\bf canonical undercut} of $B$ if 
${\sf Claim}(A) \equiv \neg \bigwedge {\sf Support}(B)$; 
$I$ is a {\bf defeating rebuttal} of $J$ if 
${\claim}(I) \vdash \neg {\claim}(J)$.

\begin{example}
From the knowledgebase $\Delta = \{ \tt
	a\vee b, a \leftrightarrow b,
	c\rightarrow a,
	\neg a\wedge \neg b,
	a,b,c,
	a \rightarrow b,
	\neg a, \neg b, \neg c
	\}$, 
the arguments and counterarguments include the following.
\[
\begin{array}{l}
\langle \{\tt  a \vee b, c \}, (a \vee b) \wedge c \rangle
	\mbox{ is a defeater of } 
	\langle \{ \neg a, \neg b \}, \neg a \wedge \neg b \rangle\\
	

\langle \{\tt \neg a\wedge \neg b \}, \neg (a \wedge b) \rangle
	\mbox{ is an undercut of } 
	\langle \{ a,b,c \}, a \wedge b \wedge c \rangle\\
	
\langle \{\tt \neg a\wedge \neg b \}, \neg a \rangle
	\mbox{ is a direct undercut of } 
	\langle \{ a,b,c \}, a \wedge b \wedge c \rangle\\

\langle \{\tt \neg a\wedge \neg b \}, \neg (a \wedge b \wedge c) \rangle
	\mbox{ is a canonical undercut of } 
	\langle \{ a,b,c \}, a \wedge b \wedge c \rangle\\

	
\langle \{\tt a, a \rightarrow b \}, b  \rangle
	\mbox{ is a defeating rebuttal of } 
	\langle \{ \neg a \wedge \neg b, \neg c \}, \neg(b \vee c) \rangle\\
\end{array}
\]

\end{example}

 Note, the definitions presented in this section can also be used directly with first-order classical logic, so $\Delta$ and $\alpha$ can be from the first-order classical language. 
 
For further coverage of the properties of deductive argumentation, see \cite{Besnard2001,Besnard2008,Gorogiannis2011}, 
and for a review of instantiation of argument graphs with deductive arguments, see \cite{BesnardHunter2014}.

\section{Instantiation of bipolar argument graphs}
\label{section:instantiation}

We now briefly review bipolar argument graphs,
and then we bring the approaches of bipolar argument graphs and of deductive argumentation together using an instantiate function that assigns a deductive argument to each node in the bipolar argument graph.

In this paper, we assume a {\bf bipolar argument graph} is a tuple $(\graph,\lab)$
where $\graph$ is a directed graph and $\lab$ is an assignment of a label to each arc.
See Figures \ref{fig:epilogic1}, \ref{ex:saltsweet}, \ref{ex:seaside}, 
and \ref{ex:threecycle} for some examples of bipolar argument graphs.
Let $\nodes(\graph)$ be the nodes in the graph, where each node denotes an argument,
and let $\arcs(\graph)$ be the arcs in the graph, where each arc denotes the first argument in the pair having an impact on the second argument. 
So the labelling function $\lab$ is an assignment from $\nodes(\graph)$ to $\{+,-,\ast\}$
where $+$ denotes the source argument supports the target argument,
$-$ denotes the source argument attacks the target argument,
and $\ast$ denotes the influence of the source argument on the target argument is ambiguous (i.e. it could be either supporting or attacking).

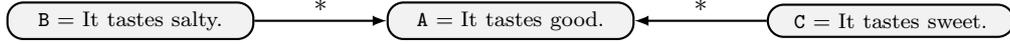
\begin{figure}
\begin{center}
\begin{tikzpicture}
[->,>=latex,thick, arg/.style={draw,text centered, 
shape=rectangle, rounded corners=6pt,
fill=gray!10,font=\footnotesize}]
\node[arg] (a1) [text width=30mm] at (5,0) {$\xa$ = {\rm It tastes good.}};
\node[arg] (a2) [text width=30mm] at (0,0)  {$\xb$ = {\rm It tastes salty.}};
\node[arg] (a3) [text width=30mm] at (10,0) {$\xc$ = {\rm It tastes sweet.}};
\path	(a2) edge node[above] {$*$} (a1);
\path	(a3) edge node[above] {$*$} (a1);
\end{tikzpicture}
\end{center}
\caption{\label{ex:saltsweet}
The bipolar argument graph $(\graph,\lab)$ for the taste of a food item.
Each argument is a claim with implicit premises. 
We explain these labels as follows. 
Consider an item of food (apart from chocolate or caramel). 
If it tastes salty and it does not taste sweet, 
or it does not taste salty and it tastes sweet,
then it tastes ok,
and if tastes salty and sweet,
then it does not taste ok. 
Given these assumptionms, the influence of $\qw{B}$ and $\qw{C}$ on $\qw{A}$ is not simply a positive or a negative one.
Rather it is ambiguous. 
}
\end{figure}

\begin{figure}
\begin{center}
\begin{tikzpicture}
[->,>=latex,thick, arg/.style={draw,text centered, 
shape=rectangle, rounded corners=6pt,
fill=gray!10,font=\footnotesize}]
\node[arg] (a1) [text width=50mm] at (0,0) {$\xa$ = {\rm A holiday by the sea-side is good because water-based activities are fun.}};
\node[arg] (a2) [text width=50mm] at (8,0)  {$\xb$ = {\rm Going on a trip on an offshore yacht can be challenging.}};
\path	(a2) edge node[above] {$*$} (a1);
\end{tikzpicture}
\end{center}
\caption{\label{ex:seaside}A bipolar argument graph where $\xb$ is ambigious as to whether it is supporting or attacking. It could be supporting if ``being challenging" is interpreted as a ``fun water-based activity" or it could be attacking if ``being challenging" is interpreted as a ``water-based activity that is not fun".}
\end{figure}
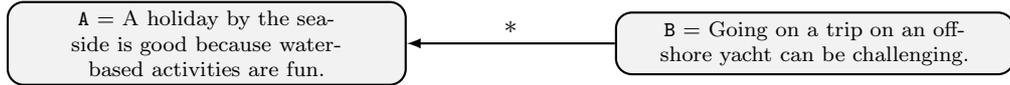

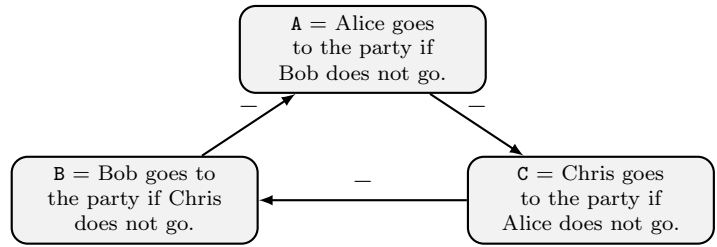
\begin{figure}
\begin{center}
\begin{tikzpicture}
[->,>=latex,thick, arg/.style={draw,text centered, 
shape=rectangle, rounded corners=6pt,
fill=gray!10,font=\footnotesize}]
\node[arg] (a1) [text width=30mm] at (3,2) {$\xa$ = {\rm Alice goes to the party if Bob does not go.}};
\node[arg] (a2) [text width=30mm] at (0,0)  {$\xb$ = {\rm Bob goes to the party if Chris does not go.}};
\node[arg] (a3) [text width=30mm] at (6,0) {$\xc$ = {\rm Chris goes to the party if Alice does not go.}};
\path	(a2) edge node[above] {$-$} (a1);
\path	(a3) edge node[above] {$-$} (a2);
\path	(a1) edge node[above] {$-$} (a3);
\end{tikzpicture}
\end{center}
\caption{\label{ex:threecycle}
The bipolar argument graph $(\graph,\lab)$ that involves a three-cycle.
}
\end{figure}

Using the definition of bipolar argument graph, we can add the notion of an instantiation function. This is a function that assigns a deductive argument to each node in the graph.

\begin{definition}
An {\bf instantiation function} is a function $\inst:\nodes(\graph)\rightarrow {\cal D}$
where ${\cal D}$ is a set of deductive arguments. 
An {\bf instantiated bipolar argument graph} is a tuple $(\graph,\lab,\inst)$ where $(\graph,\lab)$ is a bipolar argument graph and $\inst$ is an instantiation function for $\nodes(\graph)$. 
\end{definition}

We illustrate instantiation functions for bipolar argument graphs in Figure \ref{fig:instantiation} above
and in Examples \ref{ex:threecycle2} and \ref{ex:saltsweetagain} below.

\begin{example}
\label{ex:threecycle2}
Continuing the example in Figure \ref{ex:threecycle}, 
let $\qw{a}$ = {\rm Ann goes to the party}, 
$\qw{b}$ = {\rm Bob goes to the party},
and 
$\qw{c}$ = {\rm Chris goes to the party}.
Let $\inst(\xa) = \langle \{ \tt \neg b, \neg b \rightarrow a \}, a \rangle$.
and $\inst(\xb) = \langle \{ \tt \neg c, \neg c \rightarrow b \}, b  \rangle$ 
and $\inst(\xc) = \langle \{ \tt \neg a, \neg a \rightarrow c \}, c  \rangle$.
\end{example}

\begin{example}
\label{ex:saltsweetagain}
Continuing the example in Figure \ref{ex:saltsweet}, 
let $\qw{sa}$ = {\rm it tastes salty}, 
$\qw{sw}$ = {\rm it tastes sweet},
and 
$\qw{tg}$ = {\rm it tastes good}.
Let $\inst(\xa) = \langle \{ \tt sa \vee sw, \neg sa \vee \neg sw, 
			(sa \wedge \neg sw) \vee (\neg sa \wedge sw) \rightarrow tg  \}, tg \rangle$.
and $\inst(\xb) = \langle \{ \tt  sa \},  sa \rangle$ 
and $\inst(\xc) = \langle \{ \tt  sw \},  sw \rangle$.
\end{example}

The above example shows how there are multiple instantiations that would make sense given the text associated with the abstract arguments. This is even with the same set of propositional atoms in the logical language.
For instance, for $\qw{A}$, then $\inst(\xa) = \langle \{ \tt sa, \neg sw, (sa \wedge \neg sw) \rightarrow tg  \}, tg \rangle$,
and $\inst(\xa) = \langle \{ \tt \neg sa, sw, (\neg sa \wedge sw) \rightarrow tg  \}, tg \rangle$, are also possible instantiations.

The observation that there can be multiple instantiations is a reflection of 
the ambiguity that arises from abstract argumentation where the exact meaning of the argument is not formally specified.
To further illustrate how there are choices for how we instantiate abstract arguments, we consider the following examples. 

\begin{example}
\label{ex:instantiate1}
Consider the bipolar argument graph in Figure \ref{fig:epilogic1}.
We can define the instantiation function below
where the leaf nodes are atomic arguments,
and the non-leaf nodes are based on a conditional formula that explicitly takes the supporting arguments, 
but not the attacking arguments, into account. 
\[
\begin{array}{l}
\inst(\xa) = \langle\footnotesize \tt \{has\_symptom1, has\_symptom2,\\
\hspace{15mm}\tt has\_symptom1 \wedge has\_symptom2 \rightarrow has\_disease\}, has\_disease\rangle\\
\inst(\xb) = \langle\footnotesize\tt \{has\_symptom1, has\_symptom1 \rightarrow has\_alternative\}, has\_alternative\rangle\\
\inst(\xc) = \langle\footnotesize\tt \{has\_high\_heart\_rate\}, heart\_high\_heart\_rate\rangle\\
\inst(\xd) = \langle\footnotesize\tt \{has\_symptom1\}, has\_symptom1\rangle\\
\inst(\xe) = \langle\footnotesize\tt \{has\_symptom2\}, has\_symptom2\rangle\\
\end{array}
\]
\end{example}

An alternative to the above example is the following example 
where the non-leaf nodes are based on a conditional formula that explicitly takes both the supporting arguments, 
and the attacking arguments, into account.

\begin{example}
\label{ex:instantiate2}
Consider the bipolar graph given in Figure \ref{fig:epilogic1}. 
We can define the instantiation function as follows
where the leaf nodes are atomic arguments,
and the non-leaf nodes are based on a conditional formula that takes the supporting and attacking arguments into account. 
\[
\begin{array}{l}
\inst(\xa) = \langle\tt \{has\_symptom1, has\_symptom2, \neg has\_alternative,\\
\hspace{15mm}\tt has\_symptom1 \wedge has\_symptom2 \wedge \neg has\_alternative \rightarrow has\_disease\}, has\_disease\rangle\\
\inst(\xb) = \langle\tt \{has\_symptom1, \neg has\_high\_heart\_rate, \\
\hspace{15mm}\tt has\_symptom1 \wedge \neg has\_high\_heart\_rate \rightarrow has\_alternative\}, has\_alternative\rangle\\
\inst(\xc) = \langle\tt \{has\_high\_heart\_rate\}, has\_high\_heart\_rate\rangle\\
\inst(\xd) = \langle\tt \{has\_symptom1\}, has\_symptom1\rangle\\
\inst(\xe) = \langle\tt \{has\_symptom2\}, has\_symptom2\rangle\\
\end{array}
\]
\end{example}

Another alternative to Example \ref{ex:instantiate1} and Example \ref{ex:instantiate2} 
is to give more sophisticated premises as we do in Figure \ref{fig:instantiation}
or in Example \ref{ex:predicate} below. 

\begin{example}
\label{ex:predicate}
Consider argument $\xc$ in the bipolar argument graph given in Figure \ref{fig:epilogic1}.
We could define $\inst$ so that $\support(\inst(\xc))$ is the following set which includes a quantified formula from first-order predicate logic.
\[
\begin{array}{l}
\qw{blood\_pressure(12Oct21,123)},\\
\qw{blood\_pressure(12Nov21,127)},\\
\tt \forall x_1,x_2,y_1,y_2\; 
\qw{blood\_pressure(x_1,y_1)} \wedge 
\qw{blood\_pressure(x_2,y_2)} \\
\hspace{1cm} 
\tt \wedge \; x_1 \neq x_2  
\wedge (y_1 \geq 105) 
\wedge (y_2 \geq 105) 
\rightarrow heart\_rate(high,chronic)
\end{array}
\]
\end{example}

We can assume that a bipolar argument graph is given to us (e.g. we could be listening to a discussion), and we want to check that the instantiated bipolar argument graph is reasonable. So this then raises the question of what ``reasonable" means in this context. We address this question in the next section by considering appropriate constraints on instantiations.

\section{Constraints on instantiation}
\label{section:constraints}

In this section, we define constraints that specify options for saying whether an instantiated bipolargraph $\igraph = (\graph,\lab,\inst)$ is reasonable. 
We will consider two types of constraints that specify types of instantiation. 
These are
{\bf knowledgebase constraints} that connect the arguments in the bipolar argument graph with a knowledgebase 
that is used to provide the premises for the arguments, 
and {\bf structural constraints} that connect the structure of the bipolar argument graph (i.e. the nodes and arcs in $\graph$ and its labelling $\lab$)
with the logical nature of the instantiated arguments.

\subsection{Knowledgebase constraints}

We can consider the knowledge used in the instantiated arguments as coming from a knowledgebase. 
For a knowledgebase $\Delta$, let $\args(\Delta)$ be the set of deductive arguments from $\Delta$.

\begin{definition}
For an instantiated bipolar argument graph $\igraph = (\graph,\lab,\inst)$ 
and a knowledgebase $\Delta$
\begin{itemize}

\item $\igraph$ {\bf uses} $\Delta$
iff $\codomain(\igraph) \subseteq \args(\Delta)$

\item $\igraph$ {\bf exhausts} $\Delta$
iff $\codomain(\igraph) = \args(\Delta)$

\end{itemize}
where $\codomain(\igraph)$ is the set of instantiated arguments in $\igraph$ (i.e. 
$\codomain(\igraph) = \{ \inst(A) \mid A \in \nodes(\graph) \} $).
\end{definition}

Also using the $\codomain$ function, 
we can define the $\inform$ function to retrieve the formulae used in the premises of the arguments 
(i.e. $\inform(\igraph) = \{ \phi \mid \phi \in \support(I) \mbox{ and } I \in \codomain(\igraph) \}$. 
So $\igraph$ uses $\Delta$
when all the formulae used in the premises of the instantiated arguments come from the knowledgebase.
(i.e. $\inform(\igraph)$). 
We say that $\igraph$ {\bf displays} $\Delta$ iff $\Delta = \inform(\igraph)$.
In other words, each formulae in $\Delta$ is used as a premise in at least one argument in the instantiated bipolar argument graph.

\begin{proposition}
For an instantiated bipolar argument graph $\igraph$
and knowledgebase $\Delta$, 
if $\igraph$ exhausts $\Delta$,
then $\igraph$ displays $\Delta$.
\end{proposition}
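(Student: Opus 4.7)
The plan is to unpack the three definitions ($\codomain$, $\args$, $\inform$) and then prove $\inform(\igraph) = \Delta$ by double inclusion, using $\codomain(\igraph) = \args(\Delta)$ as the only hypothesis.

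For the easy inclusion $\inform(\igraph) \subseteq \Delta$, I would take an arbitrary $\phi \in \inform(\igraph)$, obtain an instantiated argument $I \in \codomain(\igraph)$ with $\phi \in \support(I)$, and then invoke the exhausts hypothesis to conclude $I \in \args(\Delta)$. By the very definition of a deductive argument built from $\Delta$, the support of any such $I$ is a subset of $\Delta$, so $\phi \in \Delta$. This direction only uses $\codomain(\igraph) \subseteq \args(\Delta)$, i.e.\ the weaker \emph{uses} property, so it would give the (still useful) side observation that \emph{uses} already implies $\inform(\igraph) \subseteq \Delta$.

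For the reverse inclusion $\Delta \subseteq \inform(\igraph)$, the key move is to produce, for each $\phi \in \Delta$, a concrete witness argument in $\args(\Delta)$ whose support contains $\phi$. The natural candidate is $\langle\{\phi\},\phi\rangle$: it is minimal, it entails $\phi$ trivially, and its premise is drawn from $\Delta$. Since by hypothesis $\codomain(\igraph) = \args(\Delta)$, this witness must appear as $\inst(A)$ for some node $A$, which puts $\phi$ into $\inform(\igraph)$.

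The main obstacle is that $\langle\{\phi\},\phi\rangle$ is a legal deductive argument only when $\{\phi\}$ is consistent, i.e.\ $\phi \not\vdash \bot$. If $\Delta$ contains a contradictory formula $\phi$, then no deductive argument whatsoever can use $\phi$ as a premise, so $\phi \notin \inform(\igraph)$ even when $\igraph$ exhausts $\Delta$. I would handle this either by noting that throughout the paper knowledgebases are tacitly assumed to consist of individually satisfiable formulae (which is the normal convention in deductive argumentation), or by restricting the statement to the satisfiable fragment of $\Delta$. With that caveat addressed, the two inclusions combine to give $\inform(\igraph) = \Delta$, which is exactly the condition for $\igraph$ to display $\Delta$.
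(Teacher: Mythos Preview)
Your argument is correct and follows essentially the same route as the paper's proof: both hinge on the observation that every $\phi \in \Delta$ occurs in the support of some argument in $\args(\Delta)$, and the paper's terse line ``for each $\alpha \in \Delta$, there is a deductive argument $I \in \args(\Delta)$ such that $\alpha\in\support(I)$'' is exactly your witness step with $\langle\{\phi\},\phi\rangle$ left implicit. Your version is more careful in two respects: you spell out both inclusions (the paper only argues $\Delta \subseteq \inform(\igraph)$ and then asserts equality, relying tacitly on $\support(I)\subseteq\Delta$ for $I\in\args(\Delta)$), and you correctly flag the edge case of a self-contradictory $\phi\in\Delta$, which the paper's proof shares but does not acknowledge.
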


\begin{proof}
Assume $\igraph$ exhausts $\Delta$.
So $\codomain(\igraph) = \args(\Delta)$.
So for each $\alpha \in \Delta$, there is a deductive argument $I \in \args(\Delta)$ such that $\alpha\in\support(I)$,
and hence there is a deductive argument $I \in \codomain(\igraph)$ such that $\alpha\in\support(I)$.
So $\Delta = \inform(\igraph)$.
So $\igraph$ displays $\Delta$.
\end{proof}

Obviously, it is an extreme situation when $\igraph$ exhausts $\Delta$, and it is straightforward to have less extreme situations where $\igraph$ displays $\Delta$.

We can satisfy the uses relation if the knowledgebase is the set of formulae in the premises of the instantiated arguments. However, in general, we do not expect the exhausts relation holds: With classical logic, there are many deductive arguments that can be constructed from a small set of formulae, and there is a lot of repetition between the deductive arguments. For instance, suppose we have the deductive arguments $\langle \{  \tt a, \neg a \vee \neg b \}, \neg b \rangle$ and $\langle \{  \tt b \}, b \rangle$, is there any value of having further deductive arguments such as $\langle \{  \tt \neg a \vee \neg b, b \}, \neg a \rangle$? It just is another way of showing that there is an inconsistency involving these three formulae. Bringing this third deductive argument up in a discussion or debate is unlikely to be appreciated given that it is in a sense redundant. So we include the exhausts relation for understanding the space of constraints but we do not advocate its use in general.

\subsection{Structural constraints}

Next, we consider some potentially desirable constraints that we might want to assume on instantiating a bipolar argument graph. 
Different constraints give rise to different choices of deductive arguments and relationships between them.

\subsubsection{Definitions for structural constraints}

We will consider five sets of constraints called EQUIV, NEG, POS, INC, and SUP.
For each constraint, we assume that the constraint concerns all the arguments in an instantiated bipolar argument graph 
$\igraph = (\graph,\lab,\inst)$, 
and so the constraint holds for a graph if and only if it holds for all the arguments in the graph.

\begin{figure}
\begin{center}
\begin{tikzpicture}
[->,>=latex,thick, arg/.style={draw,text centered, 
shape=rectangle, rounded corners=6pt,
fill=gray!10,font=\footnotesize}]
\node[arg] (a) [text width=40mm] at (5,0) {$\inst(\xa) = \langle \tt \{ f, f \rightarrow e \}, e \rangle $};
\node[arg] (b) [text width=40mm] at (0,0) {$\inst(\xb) = \langle \tt \{ p, p \rightarrow \neg f \}, \neg f \rangle $};
\node[arg] (c) [text width=40mm] at (10,0) {$\inst(\xc) = \langle \tt \{ c, c \rightarrow f \}, f \rangle $};
\path	(b) edge[] node[above] {$-$} (a);
\path	(c) edge[] node[above] {$+$} (a);
\draw[->] (c) |-  (5,0.8) node[above] {$-$} -| (b) ;
\draw[->] (b) |-  (5,-0.8) node[below] {$-$} -| (c) ;
\end{tikzpicture}
\end{center}
\caption{\label{fig:neg:1}This instantiated bipolar argument graph (where $\tt e$ = {\em it escapes predators by flying}, $\tt f$ = {\em it is capable of flying}, $\tt p$ = {\em it is a penguin}, and $\tt c$ = {\em it chirps})  
conforms to {\rm NEG1}, {\rm NEG2}, {\rm NEG3}, {\rm NEG4}, {\rm NEG5}, {\rm NEG6}, {\rm NEG7},
{\rm  POS1}, {\rm  POS2}, {\rm  POS3}, {\rm  POS4}, {\rm  POS6}, {\rm  POS7}, {\rm  POS8}, {\rm  POS9}, but not to {\rm  POS5}.}
\end{figure}
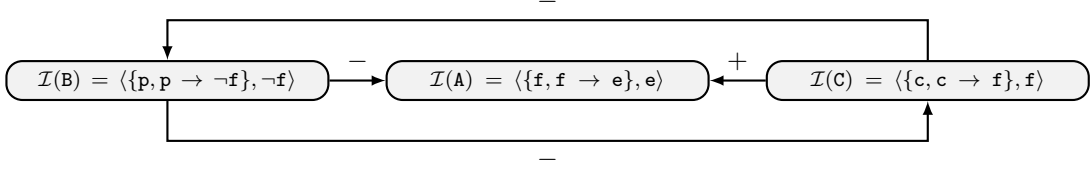

The first set of constraints just has the constraint EQUIV. This is a syntax-independence requirement and so equivalent pairs of arguments have the same labels.
\[
\mbox{(EQUIV) } \text{ if } \inst(A) \equiv \inst(A') \text{ and } \inst(B) \equiv \inst(B'), \text{ then } \lab(A,B) = \lab(A',B')\\
\]

Next we consider the NEG constraints. These limit what the negative label implies, and thereby limit what are allowable instantiations for negative labelled arcs. 
\[
\begin{array}{l}
\mbox{(NEG1) } \text{ if } \lab(A, B) = -, 
\text{ then }\{ \claim(\inst(A)) \} \not\vdash \claim(\inst(B))\\
\mbox{(NEG2) } \text{ if } \lab(A, B) = -, 
\text{ then }\{ \claim(\inst(A)) \} \not\vdash\phi 
\mbox{ for any } \phi \in \support(\inst(B))\\
\mbox{(NEG3) } \text{ if } \lab(A, B) = -, 
\text{ then }\{ \claim(\inst(A)) \} \cup \support(\inst(B)) \vdash \bot\\
\mbox{(NEG4) } \text{ if } \lab(A, B) = -, \text{ and } \claim(\inst(C)) \vdash \claim(\inst(A)), 
\text{ then } \lab(C, B) = -\\
\mbox{(NEG5) } \text{ if } \lab(A, B) = -, \text{ and } \support(\inst(B)) \subseteq \support(\inst(C)), 
\text{ then } \lab(A, C) = -\\
\mbox{(NEG6) } \text{ if } \lab(A, B) = -, \text{ and } \claim(\inst(C)) \vdash \claim(\inst(A)), 
\mbox{ and } (C,B)\in\arcs(\graph), 
\text{ then } \lab(C, B) \neq + \\
\mbox{(NEG7) } \text{ if } \lab(A, B) = -, \text{ and } \support(\inst(B)) \subseteq \support(\inst(C)), 
\mbox{ and } (A,C)\in\arcs(\graph), 
\text{ then } \lab(A, C) \neq + \\
\end{array}
\]

We explain these constraints as follows.
{\rm NEG1} and {\rm NEG2} ensure that the claim of the attacking argument does not imply the claim or support of the attacked argument.
Next {\rm NEG3} and {\rm NEG4} (respectively {\rm NEG5}) have been adapted 
from postulates by Gorogiannis and Hunter \cite{Gorogiannis2011} (respectively Amgoud and Besnard \cite{Amgoud2009})
for notions of counterargument in logic-based argumentation. 
{\rm NEG3} mandates that if an argument attacks
another, then the claim of the former should be inconsistent with the
support of the latter. 
This  reflects a fundamental assumption in
logical argumentation that for an attack to take place,
the attacking argument must make it specific in its claim that it contradicts
the evidence offered by the attacked argument.
{\rm NEG4} and {\rm NEG5} impose certain fairness restrictions on existing attacks:
{\rm NEG4} requires that any argument with a stronger claim than $A$, i.e., one that
logically entails that of $A$, should also attack anything $A$ attacks.
{\rm NEG5} mandates that any argument whose
support is a superset of that of $B$, and thus stronger than that of $B$, should
also be attacked by $A$. 
We also consider weaker versions of {\rm NEG4} and {\rm NEG5}. 
For {\rm NEG6}, there is the extra condition that there is an arc from $C$ to $B$, and if so, then the label cannot be $+$.
This means that if $\lab(A, B) = -$ and $\claim(\inst(C)) \vdash \claim(\inst(A))$ hold, 
then it is not necessarily the case that there is an arc from $C$ to $B$ with label $-$.
Similarly for {\rm NEG7}, if  $\lab(A, B) = -$ and $\support(\inst(B)) \subseteq \support(\inst(C))$, 
it is not necessarily the case that there is an arc from $C$ to $B$ with label $-$.
Adopting {\rm NEG6} (or {\rm NEG7}) gives the flexibility to include counterarguments with stronger claims (or weaker premises) in the graph but to not have an arc labelled with $-$. This might be because the extra argument is not simply attacking, and so there might be a need to label it with $*$.

We give two simple examples of instantiated bipolar argument graphs in Figures \ref{fig:neg:1} and \ref{fig:neg:2} that satisfy the NEG constraints as well as some of the POS constraints which we define below.

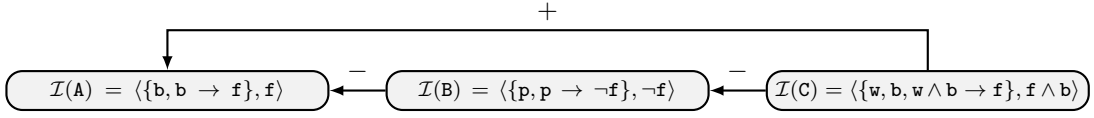
\begin{figure}
\begin{center}
\begin{tikzpicture}
[->,>=latex,thick, arg/.style={draw,text centered, 
shape=rectangle, rounded corners=6pt,
fill=gray!10,font=\footnotesize}]
\node[arg] (a) [text width=40mm] at (0,0) {$\inst(\xa) = \langle \tt \{ b, b \rightarrow f \}, f \rangle $};
\node[arg] (b) [text width=40mm] at (5,0) {$\inst(\xb) = \langle \tt \{ p, p \rightarrow \neg f \}, \neg f \rangle $};
\node[arg] (c) [text width=40mm] at (10,0) {$\inst(\xc) = \langle \tt \{ w, b, w\wedge b \rightarrow f \}, f \wedge b \rangle $};
\path	(b) edge[] node[above] {$-$} (a);
\path	(c) edge[] node[above] {$-$} (b);
\draw[->] (c) |-  (5,0.8) node[above] {$+$} -| (a) ;
\end{tikzpicture}
\end{center}
\caption{\label{fig:neg:2}This instantiated bipolar argument graph (where $\tt b$ = {\em it is a bird}, $\tt f$ = {\em it is capable of flying}, $\tt p$ = {\em it is a penguin}, and $\tt w$ = {\it has wings})  
conforms to {\rm NEG1}, {\rm NEG2}, {\rm NEG3}, {\rm NEG4}, {\rm NEG5}, {\rm NEG6}, {\rm NEG7},
{\rm  POS1}, {\rm  POS2}, {\rm  POS4}, {\rm  POS5}, {\rm  POS6}, {\rm  POS7}, {\rm  POS8}, {\rm  POS9},
but not to {\rm  POS3}. The argument $\inst({\xc})$ is a supporter that has the same claim as argument $\inst({\xa})$ but it has a more specialized support. }
\end{figure}

The third set of constraints is the POS set. These constraints are like the NEG constraints in that they capture some implications of labelling. 
More specifically, the POS constraints limit what the positive label implies, and thereby limit what are allowable instantiations for positive labelled arcs. 
\[
\begin{array}{l}
\mbox{(POS1) } \text{ if } \lab(A, B) = +, 
\text{ then }\support(\inst(A)) \cup \support(\inst(B))\not\vdash\bot\\
\mbox{(POS2) } \text{ if } \lab(A, B) = +, 
\text{ then }\{ \claim(\inst(A)) \} \cup \support(\inst(B))\not\vdash\bot\\
\mbox{(POS3) } \text{ if } \lab(A, B) = +, 
\text{ then there is a }  \phi \in \support(\inst(B))  
\mbox{ s.t. } \claim(\inst(A)) \mbox{ is } \phi\\
\mbox{(POS4) } \text{ if } \lab(A, B) = +, 
\text{ then there is a }  \Gamma\subseteq \support(\inst(B))  
\mbox{ s.t. } \claim(\inst(A)) \vdash \wedge\Gamma\\
\mbox{(POS5) } \text{ if } \lab(A, B) = +, 
\text{ then } \claim(\inst(A)) \vdash \wedge\support(\inst(B)),
\text{ and } \support(\inst(B))\neq\emptyset\\
\mbox{(POS6) } \text{ if } \lab(A, B) = +, 
\text{ and } \claim(\inst(C)) \vdash \claim(\inst(A)), 
\text{ then } \lab(C, B) = +\\
\mbox{(POS7) } \text{ if } \lab(A, B) = +,  
\text{ and } \support(\inst(B)) \subseteq \support(\inst(C)), 
\text{ then } \lab(A, C) = +\\
\mbox{(POS8) } \text{ if } \lab(A, B) = +, 
\mbox{ and } (C,B)\in\arcs(\graph), 
\text{ and } \claim(\inst(C)) \vdash \claim(\inst(A)), 
\text{ then } \lab(C, B) \neq -\\
\mbox{(POS9) } \text{ if } \lab(A, B) = +,
\mbox{ and } (A,C)\in\arcs(\graph), 
\text{ and } \support(\inst(B)) \subseteq \support(\inst(C)), 
\text{ then } \lab(A, C) \neq -\\
\end{array}
\]

{\rm  POS1} and {\rm  POS2} ensure that if an arc is labelled as a supporting relationship,
then the arguments are consistent either with respect to premises 
or with respect to claim of supporter and premises of supportee.
{\rm  POS3} captures a notion of support 
where the claim of the supporting argument implies one of the premises of the supported argument;
{\rm  POS4} captures a notion of support 
where the claim of the supporting argument implies the conjunction of some of the premises of the supported argument;
{\rm  POS5} captures a notion of support 
where the claim of the supporting argument implies the conjunction of the premises of the supported argument; 
{\rm  POS6} and {\rm  POS7} are counterparts to {\rm NEG4} and {\rm NEG5} respectively. 
{\rm  POS6} ensures that if an argument is supported by a supporter, 
then any argument that has a stronger claim, is also labelled as a supporter;
{\rm  POS7} ensures that if an argument is supported by a supporter, 
then any argument that has a superset of the premises is also supported by that supporter. 
We consider weaker versions of {\rm  POS6} and {\rm  POS7} as follows. 
For {\rm  POS8}, 
if there is a positive label on an arc from A to B,
and an argument C s.t.
$\claim(\inst(C)) \vdash \claim(\inst(A))$ holds,
then either there is no arc from C to B or it is not labelled $-$. 
Similarly for {\rm  POS9}, 
if there is a positive label on an arc from A to B,
and an argument C s.t.
$\support(\inst(B)) \subseteq \support(\inst(C))$ holds,
then either there is no arc from A to C or it is not labelled $-$. 
{\rm POS8} and {\rm POS9} are counterparts to {\rm NEG6} and {\rm NEG7}.

\begin{example}
Consider the following arguments. 
If $\lab(A_1,A_2) = +$, then {\rm  POS3} holds for $A_1$ and $A_2$;
If $\lab(A_1,A_2) = +$, and $\lab(A_3,A_2) = +$, then {\rm  POS6} holds for $A_1$, $A_2$, and $A_3$;
And if $\lab(A_5,A_3) = +$, and $\lab(A_5,A_4) = +$, then {\rm  POS7} holds for $A_3$, $A_4$, and $A_5$.
\[
\begin{array}{l}
\inst(A_1) = \langle\{\tt a, \neg a \vee \neg b \vee \neg c  \}, \neg b \vee \neg c\rangle\\
\inst(A_2) = \langle\{\tt \neg b \vee \neg c, \neg b \rightarrow d, \neg c \rightarrow d  \}, d \rangle\\
\inst(A_3) = \langle\{\tt a, a \rightarrow \neg b \wedge \neg c  \}, \neg b \wedge \neg c\rangle\\
\inst(A_4) = \langle\{\tt a, a \rightarrow \neg b \wedge \neg c, d  \}, \neg b \wedge \neg c \wedge d \rangle\\
\inst(A_5) = \langle\{\tt a  \}, a\rangle\\
\end{array}
\]
\end{example}

In the fourth set of constraints, the INC constraints, we consider what would be the necessary labelling in case the instantiated source argument is a defeater of the instantiated target argument. We consider three options (which are not mutually exclusive) for this.
\[
\begin{array}{l}
\mbox{(INC1) } \text{ if } \{ \claim(\inst(A)) \} \cup \support(\inst(B)) \vdash \bot, 
\text{ then } \lab(A, B) = -\\
\mbox{(INC2) } \text{ if } \{ \claim(\inst(A)) \} \cup \support(\inst(B)) \vdash \bot, 
\text{ then } \lab(A, B) \neq + \\
\mbox{(INC3) } \text{ if } \{ \claim(\inst(A)) \} \cup \support(\inst(B)) \vdash \bot 
\mbox{ and } (A,B)\in\arcs(\graph), 
\text{ then } \lab(A, B) \neq + \\
\end{array}
\]

{\rm  INC1} is the converse of {\rm NEG3}. 
However, it is a strong constraint since it says that for each pair of arguments where the former is a defeater of the latter, then there is an arc in the bipolar argument graph from the former to the latter that is labelled with $-$.
We consider weaker forms of {\rm  INC1} called {\rm  INC2} and {\rm  INC3}.
INC2 ensures that if A is a defeater of B, then there is a not a positive arc from A to B (i.e. the arc is labelled with $*$ or $-$), 
and INC3 ensures that if A is a defeater of B, then there is a not a positive arc from A to B (i.e. there is no arc from A to B or there is arc that is labelled with $*$ or $-$). 
So INC2 imposes an arc between the arguments in case of conflict and the label is not $+$, 
whereas INC3 does not impose further arcs, it just ensures that in case there is an arc, it is not labelled $+$.

\begin{example}
The bipolar argument graph $(\graph,\lab)$ is the following labelled graph.
Let $\inst(\xb) = \langle\{\neg\qw{b}\},\neg\qw{b}\rangle$
and $\inst(\xa) = \langle\{\qw{b},\qw{b}\rightarrow\qw{a}\},\qw{a}\rangle$.
So {\rm INC1} fails but {\rm INC2} and {\rm INC3} succeed. 
\begin{center}
\begin{tikzpicture}
[->,>=latex,thick, arg/.style={draw,text centered, 
shape=rectangle, rounded corners=6pt,
fill=gray!10,font=\footnotesize}]
\node[arg] (a1) [text width=10mm] at (3,0) {$\xa$};
\node[arg] (a2) [text width=10mm] at (0,0)  {$\xb$};
\path	(a2) edge node[above] {$*$} (a1);
\end{tikzpicture}
\end{center}
\end{example}

Finally, in the fifth set of constraints, the SUP set, we consider what would be the necessary labelling in case the instantiated source argument is a supporter of the instantiated target argument. We consider six options (which are not mutually exclusive) for this.
\[
\begin{array}{l}
\mbox{(SUP1) } \text{ if there is a }  \phi \in \support(\inst(B))  
\mbox{ s.t. } \claim(\inst(A)) \mbox{ is } \phi, 
\text{ then } \lab(A, B) = +\\
\mbox{(SUP2) } \text{ if there is a }   \emptyset\subset\Gamma\subseteq \support(\inst(B))  
\mbox{ s.t. } \claim(\inst(A)) \vdash \wedge\Gamma
\text{ then } \lab(A, B) = +\\
\mbox{(SUP3) } \text{ if there is a }  \phi \in \support(\inst(B))  
\mbox{ s.t. } \claim(\inst(A)) \mbox{ is } \phi, 
\text{ then } \lab(A, B) \neq -\\
\mbox{(SUP4) } 
\text{ if there is a }  \emptyset\subset \Gamma\subseteq \support(\inst(B))  
\mbox{ s.t. } \claim(\inst(A)) \vdash \wedge\Gamma,
\text{ then }  \lab(A, B) \neq - \\
\mbox{(SUP5) } \text{ if there is a }  \phi \in \support(\inst(B))  
\mbox{ s.t. } \claim(\inst(A)) \vdash \phi
\mbox{ and } (A,B)\in\arcs(\graph), 
\text{ then } \lab(A, B) \neq -\\
\mbox{(SUP6) } \text{ if there is a } \emptyset\subset  \Gamma\subseteq \support(\inst(B))  
\mbox{ s.t. } \claim(\inst(A)) \vdash \wedge\Gamma
\mbox{ and } (A,B)\in\arcs(\graph), 
\text{ then } \lab(A, B) \neq -\\
\end{array}
\]

{\rm  SUP1} is the  converse of {\rm  POS3}. It imposes a positive arc between pairs of arguments 
when the claim of the supporting argument implies a formula in the premises of the supporting argument. 
{\rm  SUP2} is the converse of {\rm  POS4}. It imposes a positive arc between pairs of arguments 
when the claim of the supporting argument implies the conjunction of some of the premises of the supporting argument.
As with {\rm  INC1}, {\rm  SUP1} and {\rm  SUP2} might be problematic. 
They force the graph to include relationships that might over-complicate the presentation, and they force the label to be $+$ 
whereas the label might need to be $*$ given the constraints involving the pair of arguments. 
To address the second of these concerns, 
the {\rm SUP3} and {\rm SUP4} constraints may be more desirable as they only enforce that the label is not $-$. 
To address both of these concerns, 
the {\rm SUP5} and {\rm SUP6} constraints may be more desirable as they do not force an arc to hold between a pair or arguments, and if there is an arc, they only enforce that the label is not $-$.

\begin{example}
\label{ex:logicalmixed}
Returning to Example \ref{ex:saltsweetagain}, 
{\rm SUP3}, {\rm SUP4}, {\rm SUP5}, and {\rm SUP6}, hold.
However, {\rm SUP1} and {\rm SUP2} do not hold 
because $\qw{sa} \vee \qw{sw} \in \support(\inst(A))$ 
and $\claim(\inst(B))\vdash\qw{sa} \vee \qw{sw}$
and $\claim(\inst(C))\vdash\qw{sa} \vee \qw{sw}$
but $\lab(B,A) = \; *$ and $\lab(C,A) = \; *$.
\end{example}

\begin{example}
\label{ex:logicalmixed2}
Consider the following instantiated bipolar argument graph. 
This graph satisfies {\rm SUP3}, {\rm SUP4}, {\rm SUP5}, and {\rm SUP6},
but not {\rm SUP1} nor {\rm SUP 2}.
Also it satisfies {\rm INC2} and {\rm INC3}, but not {\rm INC1}. 
\begin{center}
\begin{tikzpicture}
[->,>=latex,thick, arg/.style={draw,text centered, 
shape=rectangle, rounded corners=6pt,
fill=gray!10,font=\footnotesize}]
\node[arg] (a) [text width=40mm] at (6,0) {$\inst(\xa) = \langle\{\tt c, d \}, c \wedge d \rangle$};
\node[arg] (b) [text width=40mm] at (0,0)  {$\inst(\xb) = \langle\{\tt c, \neg d \}, c \wedge \neg d \rangle$};
\draw[->] (a) |-  (2.5,0.8) node[above] {$*$} -| (b) ;
\draw[->] (b) |-  (2.5,-0.8) node[above] {$*$} -| (a) ;
\end{tikzpicture}
\end{center}
\end{example}

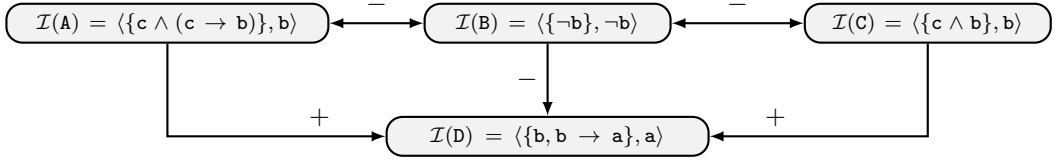
\begin{figure}
\begin{center}
\begin{tikzpicture}
[->,>=latex,thick, arg/.style={draw,text centered, 
shape=rectangle, rounded corners=6pt,
fill=gray!10,font=\footnotesize}]
\node[arg] (a) [text width=40mm] at (0,0) {$\inst(\xa) = \langle \tt \{ c \wedge (c \rightarrow b) \}, b \rangle $};
\node[arg] (b) [text width=30mm] at (5,0) {$\inst(\xb) = \langle \tt \{ \neg b \}, \neg b \rangle $};
\node[arg] (c) [text width=30mm] at (10,0) {$\inst(\xc) = \langle \tt \{ c \wedge b \}, b \rangle $};
\node[arg] (d) [text width=40mm] at (5,-1.5) {$\inst(\xd) = \langle \tt \{ b, b \rightarrow a \}, a \rangle $};
\path	(b) edge[<->] node[above] {$-$} (a);
\path	(c) edge[<->] node[above] {$-$} (b);
\path	(b) edge[->] node[left] {$-$} (d);
\draw[->] (a) |-  (2,-1.5) node[above] {$+$} -- (d) ;
\draw[->] (c) |-  (8,-1.5) node[above] {$+$} -- (d) ;
\end{tikzpicture}
\end{center}
\caption{\label{fig:satisfy}Example of an instantiated bipolar argument graph that satisfies the properties of {\rm EQUIV}, 
{\rm NEG1} to {\rm NEG7}, 
{\rm  POS1} to {\rm  POS4}, 
{\rm  POS6} to {\rm  POS9},
{\rm  INC1} to {\rm  INC3}, 
and {\rm  SUP1} to {\rm  SUP6}. }
\end{figure}

With the structural constraints, we have a number of choices. However, the essential considerations when choosing which constraints to adopt is whether we want to force an arc to hold between a pair of arguments in the graph, and whether we want to constraint what the label is for that arc.  

In general, we do not propose an absolute minimum set of structural constraints as there may be applications where any of the constraints could be deemed essential or inappropriate. However, it likely that some constraints would be adopted for most applications such as EQUIV, NEG1, NEG2, NEG3, POS1, and POS2.

\subsubsection{Properties of structural constraints}

Now we consider some inter-relationships and properties of the five groups of structural constraints presented in the previous subsection.

The constraints {\rm NEG1} to {\rm NEG3} are important for considering counterarguments for deductive arguments. 
The following result shows which constraints are satisfied for an instantiated bipolar argument graph according to the type of counterargument.

\begin{proposition}
For an instantiated bipolar argument graph $(\graph,\lab,\inst)$:
(1) If for all  $(A,B) \in \arcs(\graph)$, 
$\inst(A)$ is a defeater for $\inst(B)$, 
then $(\graph,\lab,\inst)$ satisfies {\rm NEG3}, but not {\rm NEG1}, nor {\rm NEG2}; 
(2) If for all  $(A,B) \in \arcs(\graph)$, 
$\inst(A)$ is an undercut for $\inst(B)$, 
then $(\graph,\lab,\inst)$ satisfies {\rm NEG3}, but not {\rm NEG1}, nor {\rm NEG2}; 
(3) If for all  $(A,B) \in \arcs(\graph)$, 
$\inst(A)$ is a direct undercut for $\inst(B)$, 
then $(\graph,\lab,\inst)$ satisfies {\rm NEG1}, {\rm NEG2}, and {\rm NEG3};  
(4) If for all  $(A,B) \in \arcs(\graph)$, 
$\inst(A)$ is a canonical undercut for $\inst(B)$, 
then $(\graph,\lab,\inst)$ satisfies {\rm NEG1}, {\rm NEG2}, and {\rm NEG3}; 
And 
(5)  If for all  $(A,B) \in \arcs(\graph)$, 
$\inst(A)$ is a defeating rebuttal for $\inst(B)$, 
then $(\graph,\lab,\inst)$ satisfies {\rm NEG1}, and {\rm NEG3}, but not {\rm NEG2}. 
\end{proposition}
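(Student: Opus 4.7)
The plan is to prove each of the five parts by a mix of direct implications from the definitions of the counterargument types and explicit counterexamples. Since each NEG constraint is universally quantified over $-$-labeled arcs, a positive claim (``satisfies NEG$k$'') is settled by showing that, for every arc $(A,B)$ in which $\inst(A)$ is of the stipulated counterargument type for $\inst(B)$, the consequent of NEG$k$ holds, while a negative claim is settled by exhibiting an instantiated bipolar argument graph in which every arc realises the stipulated type but the consequent fails for some $-$-labeled arc.

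I would first dispose of NEG3 uniformly across the five cases. For a defeater, $\claim(\inst(A)) \vdash \neg \bigwedge \support(\inst(B))$ combined with $\support(\inst(B)) \vdash \bigwedge \support(\inst(B))$ immediately yields $\{\claim(\inst(A))\} \cup \support(\inst(B)) \vdash \bot$. For an undercut with witness $\Psi \subseteq \support(\inst(B))$, the same argument works because $\support(\inst(B)) \vdash \bigwedge \Psi$; direct and canonical undercut inherit NEG3 as special undercuts. For a defeating rebuttal, $\claim(\inst(A)) \vdash \neg \claim(\inst(B))$ together with $\support(\inst(B)) \vdash \claim(\inst(B))$ yields the required inconsistency.

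Next I would handle the positive NEG1 and NEG2 assertions in parts (3), (4), and (5). For defeating rebuttal, NEG1 is immediate: if $\claim(\inst(A))$ entailed both $\claim(\inst(B))$ and $\neg\claim(\inst(B))$, then $\support(\inst(A)) \vdash \bot$, violating the consistency requirement for deductive arguments. For direct undercut, where $\claim(\inst(A)) \equiv \neg\alpha$ with $\alpha \in \support(\inst(B))$, the key device is minimality of $\support(\inst(B))$: assuming $\neg\alpha \vdash \claim(\inst(B))$, a case split on $\alpha$ in any model of $\support(\inst(B)) \setminus \{\alpha\}$ (the $\alpha$-true case invoking the full support, the $\alpha$-false case invoking the assumption) shows $\support(\inst(B)) \setminus \{\alpha\} \vdash \claim(\inst(B))$, contradicting minimality; replacing $\claim(\inst(B))$ by an arbitrary $\phi \in \support(\inst(B))$ gives NEG2 by the same template. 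Canonical undercut is handled analogously with $\alpha$ replaced by $\bigwedge \support(\inst(B))$, where the minimality contradiction collapses to the statement that $\support(\inst(B)) = \emptyset$.

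The remaining work, and the main obstacle, is to construct counterexamples for the negative portions of (1), (2), and (5). Small propositional instantiations should suffice: for the defeater failure of NEG1, consider $\inst(B) = \langle \{a\}, a \vee b \rangle$ attacked by $\inst(A) = \langle \{\neg a, b\}, \neg a \wedge b \rangle$, where $\claim(\inst(A)) \vdash \neg a$ witnesses the defeat yet $\claim(\inst(A)) \vdash a \vee b = \claim(\inst(B))$; for the defeater failure of NEG2, take $\inst(B) = \langle \{a, b\}, a \wedge b \rangle$ attacked by $\inst(A) = \langle \{a, \neg b\}, a \wedge \neg b \rangle$; analogous pairs can be built for (5). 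The delicate point is confirming that each proposed witness realises exactly the stipulated counterargument type and not a stronger variant whose NEG profile would differ: in particular, for part (2) one must exhibit a candidate in which $\claim(\inst(A)) \equiv \neg \bigwedge \Psi$ for some $\Psi \subseteq \support(\inst(B))$ while ensuring the minimality-based argument used for direct and canonical undercut does not inadvertently apply, and this is the step that will require the greatest care.
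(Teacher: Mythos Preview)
Your overall plan matches the paper's structure---dispatch NEG3 uniformly, argue the positive NEG1/NEG2 claims directly, and give counterexamples for the negative ones---and your minimality argument for NEG1 under direct undercut is in fact more careful than the paper's, which at that step merely asserts impossibility from consistency of the supports.

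The genuine gap is your handling of NEG2 for direct undercut. Your template is: assume $\neg\alpha \vdash \phi$ for some $\phi\in\support(\inst(B))$, case-split on $\alpha$ over models of $\support(\inst(B))\setminus\{\alpha\}$, conclude $\support(\inst(B))\setminus\{\alpha\} \vdash \phi$, and invoke minimality. But when $\phi \neq \alpha$ that conclusion is vacuous---$\phi$ already lies in $\support(\inst(B))\setminus\{\alpha\}$---so nothing about minimality is contradicted. The contrast with canonical undercut is instructive: there the case-split yields the unconditional $\vdash\phi$, which \emph{does} force $\phi$ to be redundant in $\support(\inst(B))$; for direct undercut this step simply does not go through. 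Indeed the claimed NEG2 appears to be false for direct undercut: take $\inst(B) = \langle\{a,\ a\rightarrow b\},\ b\rangle$ and $\inst(A) = \langle\{\neg a\},\ \neg a\rangle$; then $\inst(A)$ is a direct undercut of $\inst(B)$ yet $\neg a \vdash a\rightarrow b$. (The paper's proof at this point is only an assertion and shares the defect.) A smaller issue is that you promise but do not exhibit the counterexamples for parts~(2) and~(5); the paper supplies explicit witnesses there, and your proposal would be stronger with them written out, especially since---as you yourself note---the undercut case requires a witness that is a genuine undercut without being direct or canonical.
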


\begin{proof}
(Defeater NEG1) Counterexample. 
$\inst(\qw{A}) = \langle\{\qw{a}\wedge\qw{b}\},\qw{a}\wedge\qw{b}\rangle$ 
is a defeater of $\inst(\qw{B}) = \langle\{\neg\qw{a},\neg\qw{a}\rightarrow\qw{b}\},\qw{b}\rangle$, 
but $\claim(\inst(\qw{A}))\vdash\claim(\inst(\qw{B}))$,
and so ${\rm NEG1}$ fails.
(Defeater NEG2) Counterexample. 
$\inst(\qw{A}) = \langle\{\qw{a}\wedge\qw{b}\},\qw{a}\wedge\qw{b}\rangle$ 
is a defeater of $\inst(\qw{B}) = \langle\{\neg\qw{a},\qw{b}\},\neg\qw{a}\wedge\qw{b}\rangle$, 
but $\claim(\inst(\qw{A}))\vdash\qw{b}$,
and so ${\rm NEG2}$ fails.
(Defeater NEG3) Assume for all  $(A,B) \in \arcs(\graph)$, 
$\inst(A)$ is a defeater for $\inst(B)$, 
and so $\claim(\inst(A))\vdash\neg\wedge\support(\inst(B))$.
So for all  $(A,B) \in \arcs(\graph)$, 
$\claim(\inst(A))\cup\support(\inst(B))\vdash\bot$.
So ${\rm NEG3}$ succeeds.
(Undercut NEG1) 
 Counterexample. 
$\inst(\qw{A}) = \langle\{\neg\qw{a}\wedge\qw{b}\},\neg\qw{a}\wedge\qw{b}\rangle$ 
is an undercut of $\inst(\qw{B}) = \langle\{\qw{a},\qw{a}\rightarrow\qw{b}\},\qw{b}\rangle$, 
but $\claim(\inst(\qw{A}))\vdash\claim(\inst(\qw{B}))$,
and so ${\rm NEG1}$ fails.
(Undercut NEG2) 
 Counterexample. 
$\inst(\qw{A}) = \langle\{\neg\qw{a}\wedge\qw{b}\},\neg\qw{a}\wedge\qw{b}\rangle$ 
is an undercut of $\inst(\qw{B}) = \langle\{\qw{a},\qw{b}\},\qw{a}\wedge\qw{b}\rangle$, 
but $\claim(\inst(\qw{A}))\vdash\qw{b}$,
and so ${\rm NEG2}$ fails.
(Undercut NEG3)
Assume for all  $(A,B) \in \arcs(\graph)$, 
$\inst(A)$ is an undercut for $\inst(B)$, 
and so there exists a $\Psi \subseteq {\support}(\inst(B))$ 
s.t. ${\claim}(\inst(A)) \equiv \neg \wedge \Psi$.
So for all  $(A,B) \in \arcs(\graph)$, 
$\claim(\inst(A))\cup\support(\inst(B))\vdash\bot$.
So ${\rm NEG3}$ succeeds.
(Direct undercut NEG1)
Assume for all  $(A,B) \in \arcs(\graph)$, 
$\inst(A)$ is a direct undercut for $\inst(B)$.
So for each $(A,B)$, 
${\claim}(\inst(A)) \equiv \neg\phi$ for some $\phi\in\support(\inst(B))$.
Recall that a deductive argument is such that $\support(\inst(A))\not\vdash\bot$ and $\support(\inst(B))\not\vdash\bot$.
So it is not possible that also $\claim(\inst(A))\vdash\claim(\inst(B))$ holds.
So ${\rm NEG1}$ succeeds.
(Direct undercut NEG2)
Assume for all  $(A,B) \in \arcs(\graph)$, 
$\inst(A)$ is a direct undercut for $\inst(B)$.
So for each $(A,B)$, 
${\claim}(\inst(A)) \equiv \neg\phi$ for some $\phi\in\support(\inst(B))$.
Recall that a deductive argument is such that $\support(\inst(A))\not\vdash\bot$ and $\support(\inst(B))\not\vdash\bot$.
So it is not possible that also $\claim(\inst(A))\vdash\psi$ 
for some $\psi\in\support(\inst(B))$.
So ${\rm NEG2}$ succeeds.
(Direct undercut NEG3)
Assume for all  $(A,B) \in \arcs(\graph)$, 
$\inst(A)$ is a direct undercut for $\inst(B)$, 
and so there exists a $\phi \in {\support}(\inst(B))$ 
s.t. ${\claim}(\inst(A)) \equiv \neg \phi$.
So for all  $(A,B) \in \arcs(\graph)$, 
$\claim(\inst(A))\cup\support(\inst(B))\vdash\bot$.
So ${\rm NEG3}$ succeeds.
(Canonical undercut NEG1)
Assume for all  $(A,B) \in \arcs(\graph)$, 
$\inst(A)$ is a canonical undercut for $\inst(B)$.
So for each $(A,B)$, 
${\claim}(\inst(A)) \equiv \neg (\phi_1\wedge\ldots\wedge\phi_n)$ 
for $\support(\inst(B)) = \{\phi_1,\ldots,\phi_n\}$.
Recall that a deductive argument is such that $\support(\inst(A))\not\vdash\bot$ and $\support(\inst(B))\not\vdash\bot$.
So it is not possible that also $\claim(\inst(A))\vdash\claim(\inst(B))$ holds.
So ${\rm NEG1}$ succeeds.
(Canonical undercut NEG2)
Assume for all  $(A,B) \in \arcs(\graph)$, 
$\inst(A)$ is a canonical undercut for $\inst(B)$.
So for each $(A,B)$, 
${\claim}(\inst(A)) \equiv \neg (\phi_1\vee\ldots\vee\phi_n)$ 
for $\support(\inst(B)) = \{\phi_1,\ldots,\phi_n\}$.
So there is no $\phi_i\in \support(\inst(B))$
s.t. $\claim(\inst(A))\vdash\phi_i$ holds, 
otherwise by resolution, 
${\claim}(\inst(A)) \equiv \neg \vee (\support(\inst(B))\setminus\{\phi_i\}$
and hence $\inst(A)$ would not be a canonical undercut for $\inst(B)$.
(Canonical undercut NEG3)
So for each $(A,B)$, 
${\claim}(\inst(A)) \equiv \neg (\phi_1\vee\ldots\vee\phi_n)$ 
for $\support(\inst(B)) = \{\phi_1,\ldots,\phi_n\}$.
So for all  $(A,B) \in \arcs(\graph)$, 
$\claim(\inst(A))\cup\support(\inst(B))\vdash\bot$.
So ${\rm NEG3}$ succeeds.
(Defeating rebuttal NEG1)
Assume for all  $(A,B) \in \arcs(\graph)$, 
$\inst(A)$ is a defeating rebuttal for $\inst(B)$.
So for each $(A,B)$, 
${\claim}(\inst(A)) \vdash \neg {\claim}(\inst(B))$.
Recall that a deductive argument is such that $\support(\inst(A))\not\vdash\bot$ and $\support(\inst(B))\not\vdash\bot$.
So it is not possible that also $\claim(\inst(A))\vdash\claim(\inst(B))$ holds.
So ${\rm NEG1}$ succeeds.
(Defeating rebuttal NEG2) Counterexample. 
$\inst(\qw{A}) = \langle\{\qw{a}\wedge\neg\qw{b}\},\qw{a}\wedge\neg\qw{b}\rangle$ 
is a defeating rebuttal of $\inst(\qw{B}) = \langle\{\qw{a},\qw{a}\rightarrow\qw{b}\},\qw{b}\rangle$, 
but $\claim(\inst(\qw{A}))\vdash\qw{a})$,
and so ${\rm NEG2}$ fails.
(Defeating rebuttal NEG3) 
Assume for all  $(A,B) \in \arcs(\graph)$, 
$\inst(A)$ is a defeating rebuttal for $\inst(B)$,  
So for each $(A,B)$, 
${\claim}(\inst(A)) \vdash \neg {\claim}(\inst(B))$.
So, 
$\claim(\inst(A))\cup\support(\inst(B))\vdash\bot$.
So ${\rm NEG3}$ succeeds.
\end{proof}

With the NEG constraints, we have a hierarchy in terms of generality (i.e. which implies which) is captured by the following result. 

\begin{proposition}
(1) {\rm NEG1} does not imply {\rm NEG2} nor vice versa.
(2) {\rm NEG1} does not imply {\rm NEG3} nor vice versa.
(3) {\rm NEG2} does not imply {\rm NEG3} nor vice versa.
(4) {\rm NEG3} does not imply any of {\rm NEG4}, {\rm NEG5}, {\rm NEG6}, 
or {\rm NEG7}, or vice versa.
(5) {\rm NEG4} does not imply {\rm NEG5} or vice versa.
(6) {\rm NEG4} implies {\rm NEG6} but the converse does not hold. 
(7) {\rm NEG5} implies {\rm NEG7} but the converse does not hold. 
(8) {\rm NEG6} does not imply {\rm NEG7} or vice versa.
\end{proposition}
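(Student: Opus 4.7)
The plan is to handle each of the eight sub-claims separately. Items (6) and (7) contain one implication each, which will be immediate: if NEG4 forces $\lab(C,B) = -$ then certainly $\lab(C,B) \neq +$, and likewise for NEG5 $\Rightarrow$ NEG7. All remaining statements are non-implications, and I will establish them by exhibiting small instantiated bipolar argument graphs (two or three nodes suffice throughout) that satisfy one constraint while violating the other.

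For (1)--(3) I would use two-argument graphs with a single arc $\lab(A,B) = -$. Over disjoint atoms, $\inst(A) = \langle\{\qw{p}\},\qw{p}\rangle$ and $\inst(B) = \langle\{\qw{q}\},\qw{q}\rangle$ satisfies both NEG1 and NEG2 yet fails NEG3, dispatching one direction of (1)--(3). For the reverse directions, $\inst(A) = \langle\{\qw{p}\wedge\neg\qw{q}\},\qw{p}\wedge\neg\qw{q}\rangle$ and $\inst(B) = \langle\{\qw{p},\qw{q}\},\qw{p}\wedge\qw{q}\rangle$ makes $\claim(\inst(A))$ contradict $\support(\inst(B))$ (NEG3 holds) while entailing $\claim(\inst(B))$ and the premise $\qw{p}$ (so NEG1 and NEG2 fail).

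The only genuinely delicate point in (1)--(3) is showing NEG2 does not imply NEG1: one needs $\claim(\inst(A)) \vdash \claim(\inst(B))$ yet no formula of $\support(\inst(B))$ entailed by $\claim(\inst(A))$. Weakening is the obstacle, because $\claim(\inst(A)) \vdash \claim(\inst(B))$ tends to force $\claim(\inst(A)) \vdash \phi \to \claim(\inst(B))$ for any $\phi$ appearing as a material-implication premise of $B$. I would get around this by taking $\inst(B) = \langle\{\qw{p}, \qw{p}\leftrightarrow\qw{q}\},\qw{q}\rangle$ and $\inst(A) = \langle\{\qw{s}, \qw{s}\to\qw{q}\},\qw{q}\rangle$: here $\qw{q}$ entails neither $\qw{p}$ nor $\qw{p}\leftrightarrow\qw{q}$, so NEG2 survives while NEG1 fails.

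For (4), (5), and (8), and for the converse halves of (6) and (7), I would use three-node graphs built around $\inst(A) = \langle\{\neg\qw{p}\},\neg\qw{p}\rangle$, $\inst(B) = \langle\{\qw{p},\qw{p}\to\qw{q}\},\qw{q}\rangle$, and either $\inst(C) = \langle\{\neg\qw{p}\wedge\qw{r}\},\neg\qw{p}\wedge\qw{r}\rangle$ (so that $\claim(\inst(C)) \vdash \claim(\inst(A))$) or $\inst(C) = \langle\{\qw{p},\qw{p}\to\qw{q}\},\qw{q}\rangle$ variants where $\support(\inst(B)) \subseteq \support(\inst(C))$. Placing $\lab(A,B) = -$ together with a positive label on an additional arc $(C,B)$ or $(A,C)$ exhibits failures of NEG4, NEG5, NEG6, NEG7 selectively. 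For the NEG3-versus-NEGn non-implications, the two-node graph of the previous paragraph already refutes each of NEG4--NEG7 $\Rightarrow$ NEG3 vacuously (no troublesome $C$ exists). Crucially, the converses of (6) and (7) are then obtained by simply \emph{omitting} the problematic arc: NEG4 (resp.\ NEG5) demands the existence of a negatively labelled arc, so removing that arc preserves the NEG6 (resp.\ NEG7) witness vacuously while continuing to break NEG4 (resp.\ NEG5). The same arc-presence-vs-arc-absence distinction drives (8): reusing the NEG5-counterexample graph satisfies NEG6 but breaks NEG7, and reusing the NEG4-counterexample graph satisfies NEG7 but breaks NEG6. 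The main obstacle throughout is the NEG2 $\not\Rightarrow$ NEG1 case above; everything else is bookkeeping over a handful of reusable templates.
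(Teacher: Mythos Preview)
Your overall strategy---small instantiated graphs for the non-implications, and the monotone observation that $\lab(C,B)=-$ forces $\lab(C,B)\neq +$ for NEG4 $\Rightarrow$ NEG6 and NEG5 $\Rightarrow$ NEG7---matches the paper's. Your biconditional example for NEG2 $\not\Rightarrow$ NEG1 is correct and, incidentally, avoids a defect in the paper's own witness: the paper uses $\inst(A)=\langle\{\qw{a}\wedge\qw{b}\},\qw{a}\wedge\qw{b}\rangle$ and $\inst(B)=\langle\{\neg\qw{a},\neg\qw{a}\rightarrow\qw{b}\},\qw{b}\rangle$ and asserts NEG2 holds there, but $\qw{a}\wedge\qw{b}\vdash\neg\qw{a}\rightarrow\qw{b}$, so NEG2 in fact fails. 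Your insistence on a non-material-implication premise is exactly what is needed.

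There is, however, a concrete slip in your second counterexample. With $\inst(A)=\langle\{\qw{p}\wedge\neg\qw{q}\},\qw{p}\wedge\neg\qw{q}\rangle$ and $\inst(B)=\langle\{\qw{p},\qw{q}\},\qw{p}\wedge\qw{q}\rangle$ you claim $\claim(\inst(A))$ entails $\claim(\inst(B))$; it does not, since $\qw{p}\wedge\neg\qw{q}\not\vdash\qw{p}\wedge\qw{q}$. So NEG1 actually \emph{holds} in that graph. Two consequences: the example still delivers NEG3 $\not\Rightarrow$ NEG2 and, serendipitously, NEG1 $\not\Rightarrow$ NEG2; but you are now left without a witness for NEG3 $\not\Rightarrow$ NEG1. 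An easy repair is $\inst(A)=\langle\{\neg\qw{p}\wedge\qw{q}\},\neg\qw{p}\wedge\qw{q}\rangle$ against $\inst(B)=\langle\{\qw{p},\qw{p}\leftrightarrow\qw{q}\},\qw{q}\rangle$: then $\{\neg\qw{p}\wedge\qw{q}\}\cup\support(\inst(B))$ is inconsistent (NEG3 holds) while $\neg\qw{p}\wedge\qw{q}\vdash\qw{q}$ breaks NEG1, and NEG2 still holds since $\neg\qw{p}\wedge\qw{q}$ entails neither premise. The paper's own pair $\langle\{\qw{a}\wedge\qw{b}\},\qw{a}\wedge\qw{b}\rangle$ versus $\langle\{\neg\qw{a},\neg\qw{a}\rightarrow\qw{b}\},\qw{b}\rangle$ also works for this particular direction. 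Your treatment of (4)--(8) is otherwise sound.
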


\begin{proof}
(1) For a counterexample, 
consider $\tt \langle\{b,b\rightarrow a\},a \wedge b \rangle$ 
and $\tt \langle\{b,b\rightarrow \neg a\},\neg a\rangle$,
and so {\rm NEG1} holds but {\rm NEG2} does not hold.
For the failure of the converse,
consider $\tt \langle\{a \wedge b\},a\wedge b \rangle$ 
and $\tt \langle\{\neg a,\neg a \rightarrow b \}, b\rangle$,
and so {\rm NEG2} holds but {\rm NEG1} does not hold.
(2) For a counterexample, 
consider $\tt \langle\{a\},a\rangle$ 
and $\tt \langle\{b\},b\rangle$,
and so {\rm NEG1} holds but {\rm NEG3} does not hold.
For the failure of the converse,
consider $\tt \langle\{a\wedge b\},a\wedge b \rangle$ 
and $\tt \langle\{\neg a,\neg a \rightarrow b \}, b\rangle$,
and so {\rm NEG3} holds but {\rm NEG1} does not hold.
(3) For a counterexample, 
consider $\tt \langle\{a\},a\rangle$ 
and $\tt \langle\{b\},b\rangle$,
and so {\rm NEG2} holds but {\rm NEG3} does not hold.
For the failure of the converse,
consider $\tt \langle\{a\wedge b\},a\wedge b \rangle$ 
and $\tt \langle\{\neg a,\neg a \rightarrow b \}, b\rangle$,
and so {\rm NEG3} holds but {\rm NEG2} does not hold.
(4) Direct from definition.
(5) Direct from definition.
(6) Assume $\lab(A, B) = -$ 
and $\claim(\inst(C)) \vdash \claim(\inst(A))$ 
and $\lab(C, B) = -$ hold.
So $(C,B)\in\arcs(\graph)$  and $\lab(C, B) \neq +$ hold.
Hence {\rm NEG4} implies {\rm NEG6}.
For the converse, if $(C,B)\not\in\arcs(\graph)$, or $\lab(C, B) \neq \; *$,
then {\rm NEG6} holds but {\rm NEG4} does not hold. 
(7) Assume $\lab(A, B) = -$ 
and $\support(\inst(B)) \subseteq \support(\inst(C))$, 
then $\lab(A, C) = -$. 
So $(A,C)\in\arcs(\graph)$, then $\lab(A, C) \neq +$ hold.
Hence {\rm NEG5} implies {\rm NEG7}.
For the converse, if $(A,C)\not\in\arcs(\graph)$, 
or $\lab(A, C) \neq \; *$,
then {\rm NEG7} holds but {\rm NEG5} does not hold. 
(8) Direct from definition.
\end{proof}

We also have a hierarchy of POS constraints in terms of generality (i.e. which implies which) as captured by the following result.

\begin{proposition}
(1) {\rm  POS1} implies {\rm  POS2} but the converse does not hold.
(2) {\rm  POS3} implies {\rm  POS4} but not vice versa.
(3) {\rm  POS3} does not imply {\rm  POS5} or vice versa.
(4) Neither {\rm  POS3} nor {\rm POS4} imply any of {\rm  POS6}, {\rm  POS7}, {\rm  POS8} or {\rm  POS9}, or vice versa.
(5) {\rm  POS5} implies {\rm  POS2} but not vice versa.
(6) {\rm  POS6} implies {\rm  POS8} but the converse does not hold. 
(7) {\rm  POS6} does not imply {\rm  POS7} or vice versa.
(8) {\rm  POS7} implies {\rm  POS9} but the converse does not hold. 
(9) {\rm  POS8} does not imply {\rm  POS9} or vice versa.
\end{proposition}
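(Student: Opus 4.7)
The plan is to verify each of the nine clauses by directly unpacking definitions, treating the forward implications separately from the independence claims. Clauses (1), (2), (5), (6) and (8) are one-line implications; the remaining directions are non-implications that require small counterexamples. I would organise the proof to follow the numbering of the proposition.

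For the forward implications the arguments are short. Clause (1) follows by contrapositive: since $\support(\inst(A)) \vdash \claim(\inst(A))$, any inconsistency of $\{\claim(\inst(A))\} \cup \support(\inst(B))$ propagates to $\support(\inst(A)) \cup \support(\inst(B))$. For (2), if some $\phi \in \support(\inst(B))$ is literally $\claim(\inst(A))$, then $\Gamma = \{\phi\}$ witnesses POS4. For (5), the set $\support(\inst(B))$ is consistent by the definition of a deductive argument, and $\claim(\inst(A))$ is itself consistent (being entailed by consistent premises); if $\claim(\inst(A)) \vdash \wedge\support(\inst(B))$, any model of $\claim(\inst(A))$ is also a model of $\support(\inst(B))$, so POS2 follows. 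Clauses (6) and (8) are immediate: POS6 forces $\lab(C,B) = +$, which in particular gives $\lab(C,B) \neq -$, and the same reasoning handles POS7 implying POS9.

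For the non-implications, compact two- or three-argument graphs suffice. The converse of (1) is shown by a graph in which $\support(\inst(\xa))$ and $\support(\inst(\xb))$ are jointly inconsistent but $\claim(\inst(\xa))$ alone is consistent with $\support(\inst(\xb))$; the converse of (2) uses an argument whose claim implies a conjunction of premises of $\xb$ without being equal to any individual one. Clause (3) is handled by varying the size of $\support(\inst(\xb))$: if $|\support(\inst(\xb))| > 1$ and $\claim(\inst(\xa))$ equals a single premise, POS3 holds without POS5, and conversely a claim equivalent to the full conjunction without matching any individual premise gives POS5 without POS3. Clauses (7) and (9) follow the same pattern with the stronger-claim and larger-support dimensions varied independently. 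The converse-of-(6) and converse-of-(8) arguments exploit the subtlety that POS6 implicitly demands the arc $(C,B)$ to exist, whereas POS8 is vacuous when that arc is absent; taking a graph in which $(C,B) \not\in \arcs(\graph)$ makes POS8 hold and POS6 fail.

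The main obstacle will be clause (4), which packages eight independence directions into a single statement. Rather than constructing eight distinct examples, I would design one or two flexible base graphs in which each POS constraint can be individually toggled by altering a single argument or a single arc label, so that the independence claims reduce to reading off which POS holds in each variant. The other mildly delicate point is the POS6/POS8 and POS7/POS9 asymmetry: one must keep the arc-existence condition explicit in every counterexample so as not to conflate the existence-forcing constraints with the label-constraining ones.
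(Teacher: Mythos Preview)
Your proposal is correct and follows essentially the same approach as the paper: the forward implications in (1), (2), (5), (6), (8) are argued exactly as the paper does (via entailment of claim from support, taking $\Gamma=\{\phi\}$, consistency of deductive arguments, and the trivial $+\Rightarrow\neq -$ step), and your counterexample strategy for the converses of (6) and (8) --- exploiting the absence of the arc $(C,B)$ --- matches the paper's. The only difference is one of detail: for clauses (3), (4), (7), (9) the paper simply writes ``follows directly from definitions'' without exhibiting any counterexamples, whereas you plan explicit constructions; your version is therefore more complete than the paper's own proof, not less.
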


\begin{proof}
(1) Assume $\support(\inst(A)) \cup \support(\inst(B))\not\vdash\bot$ holds.
So $\{ \claim(\inst(A)) \} \cup \support(\inst(B))\not\vdash\bot$ holds. 
So {\rm  POS1} implies {\rm  POS2}.
For the failure of the converse, 
consider $\tt A = \langle\{\tt b,b\rightarrow a\},a \rangle$ 
and $\tt B = \langle\{a,a\rightarrow \neg b\}, \neg b \rangle$,
and so {\rm  POS2} holds but not {\rm  POS1}. 
(2) Assume there is a $\alpha \in \support(\inst(B))  \mbox{ s.t. } \claim(\inst(A))$ is $\alpha$.
Now let $\Gamma = \{ \alpha\}$.
So there is a $\Gamma\subseteq \support(\inst(B))$ s.t. $\claim(\inst(A))$ is $\wedge\Gamma$.
So {\rm  POS3} implies {\rm  POS4}.
To show failure of converse, consider $\Gamma$ with two or more formulae and $\claim(\inst(A)) = \wedge\Gamma$.
So {\rm  POS4} holds but not {\rm  POS3}. 
(3) Follows from definitions.
(4) Follow directly from definitions.
(5) Assume $\claim(\inst(A)) \vdash \wedge\support(\inst(B))$.
By definition of an argument, $\support(\inst(A)\not\vdash\bot$.
So $\claim(\inst(A)\not\vdash\bot$.
So $\{ \claim(\inst(A)) \} \cup \support(\inst(B))\not\vdash\bot$.
So {\rm  POS5} implies {\rm  POS2}.
To show failure of converse, 
consider $A = \langle\{\tt a\},a\rangle$ 
and $B = \langle\{\tt b\}, b\rangle$.
So {\rm  POS2} holds but {\rm  POS5} does not hold. 
(6) Assume {\rm  POS6} holds.
So if $\lab(A, B) = +$,
and $\claim(\inst(C)) \vdash \claim(\inst(A))$, 
then $\lab(C, B) = +$. 
So if $\lab(A, B) = +$, 
and $(C,B)\in\arcs(\graph)$, 
and $\claim(\inst(C)) \vdash \claim(\inst(A))$, 
then $\lab(C, B) \neq -$. 
So {\rm  POS6} implies {\rm  POS8}.
To show the converse does not hold consider either $(C,B)\not\in\arcs(\graph)$
or $\lab(C, B) = \; *$. 
In either case, {\rm  POS8} holds but {\rm  POS6} does not hold. 
(7) Follow directly from definitions. 
(8) Assume {\rm  POS7}.
So if $\lab(A, B) = +$ and $\support(I(B)) = \support(I(C))$ then $L(A, C) = +$.
So if $\lab(A, B) = +$,
and $(A,C)\in\arcs(\graph)$, 
and $\support(\inst(B)) \subseteq \support(\inst(C))$, 
then $\lab(A, C) \neq -$.
So {\rm  POS7} implies {\rm  POS9}. 
To show the converse does not hold consider either $(C,B)\not\in\arcs(\graph)$
or $\lab(C, B) = \; *.$
In either case, {\rm POS9} holds but {\rm  POS7} does not holds. 
(9) Follow directly from definitions. 
\end{proof}

Of the INC constraints, it is straightforward to see that there is a linear hierarchy where INC3 is the most general,
and INC1 is the least general. 

\begin{proposition}
{\em INC1} implies {\em INC2} and {\em INC2} implies {\em INC3}.
\end{proposition}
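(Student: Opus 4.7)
The proof is essentially immediate from the definitions, and my plan is to split it into the two claimed implications and dispatch each by direct reasoning on the labelling function $\lab$.

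For the first implication, that INC1 implies INC2, I would assume INC1 holds for the instantiated bipolar argument graph and fix an arbitrary pair $A, B$ satisfying $\{\claim(\inst(A))\} \cup \support(\inst(B)) \vdash \bot$. Applying INC1 gives $\lab(A,B) = -$. Since the codomain of $\lab$ is $\{+, -, \ast\}$ and the three values are distinct, $\lab(A,B) = -$ immediately yields $\lab(A,B) \neq +$, which is the conclusion required by INC2.

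For the second implication, that INC2 implies INC3, the key observation is that the antecedent of INC3 is strictly stronger than that of INC2: it adds the extra condition $(A,B) \in \arcs(\graph)$. So I would assume INC2 and suppose $\{\claim(\inst(A))\} \cup \support(\inst(B)) \vdash \bot$ together with $(A,B) \in \arcs(\graph)$. The inconsistency alone already lets INC2 fire, giving $\lab(A,B) \neq +$, which is exactly the conclusion of INC3.

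There is no real obstacle here; the only subtlety is noting that $-$ and $+$ are distinct values in the label alphabet $\{+,-,\ast\}$ (so the step from $\lab(A,B) = -$ to $\lab(A,B) \neq +$ is immediate), and that strengthening the hypothesis of a conditional preserves its validity. The proof will therefore be short, just a few lines per implication, and does not require any appeal to earlier propositions about instantiated bipolar argument graphs.
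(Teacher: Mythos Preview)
Your proposal is correct and matches the paper's own approach, which simply states that the result ``follows directly from the definitions.'' Your write-up just makes explicit the two trivial steps (that $-\neq +$ in the label alphabet, and that strengthening a hypothesis preserves an implication) that the paper leaves implicit.
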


\begin{proof}
Follows directly from the definitions.
\end{proof}

\begin{proposition}
If instantiated bipolar argument graph $\igraph = (\graph,\lab,\inst)$ satisfies {\rm  INC1}, 
and for all arcs $(A,B) \in \arcs(\graph)$, $\lab(A,B) = +$,
and $\igraph$ exhausts $\Delta$ , 
then $\Delta$ is consistent. 
\end{proposition}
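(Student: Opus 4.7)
The plan is to proceed by contradiction: assume $\Delta$ is inconsistent, then use the assumption that $\igraph$ exhausts $\Delta$ to pin down two specific instantiated arguments whose existence, together with INC1, forces some arc to carry the label $-$, contradicting the hypothesis that every arc is labelled $+$.

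First I would extract from the inconsistency of $\Delta$ a minimal inconsistent subset $\Phi \subseteq \Delta$. Picking any $\alpha \in \Phi$, minimality gives that $\Phi \setminus \{\alpha\}$ is consistent and entails $\neg \alpha$; I would then shrink $\Phi \setminus \{\alpha\}$ to a further subset $\Psi$ that is minimal for entailing $\neg \alpha$, so that $I = \langle \Psi, \neg \alpha \rangle$ is a genuine deductive argument built from $\Delta$. In parallel, since $\{\alpha\} \subseteq \Phi$ is consistent (again by minimality of $\Phi$), $J = \langle \{\alpha\}, \alpha \rangle$ is also a deductive argument from $\Delta$. Thus both $I$ and $J$ belong to $\args(\Delta)$.

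Next, because $\igraph$ exhausts $\Delta$ we have $\codomain(\igraph) = \args(\Delta)$, so there exist nodes $A, B \in \nodes(\graph)$ with $\inst(A) = I$ and $\inst(B) = J$. By construction, $\{\claim(\inst(A))\} \cup \support(\inst(B)) = \{\neg \alpha, \alpha\} \vdash \bot$, so the hypothesis of INC1 is satisfied for the pair $(A,B)$. INC1 then requires $\lab(A,B) = -$. This directly contradicts the assumption that every arc in $\graph$ carries the label $+$, completing the contradiction.

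The main delicate point will be the degenerate case where the minimal inconsistent subset $\Phi$ has a single element, i.e. $\Phi = \{\alpha\}$ with $\alpha$ itself a contradiction. In that situation $\langle \{\alpha\}, \alpha \rangle$ is not a deductive argument (its support is inconsistent), so the witness pair above cannot be formed directly. I would handle this either by noting that finitely representable instantiations implicitly exclude $\bot$-equivalent formulas from $\Delta$, or by arguing that if such an $\alpha$ lies in $\Delta$ then $\Phi$ must in fact be embeddable in a two-element inconsistent subset using any other formula of $\Delta$ together with $\alpha$, and then applying the main construction to that pair. Beyond this edge case, the argument is essentially a direct unpacking of INC1 against the exhaustiveness of the knowledgebase.
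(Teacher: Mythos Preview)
Your argument follows the same route as the paper's: from exhaustiveness and INC1 together with the absence of any $-$-labelled arc, conclude that no pair of instantiated arguments can witness an inconsistency, hence $\Delta$ is consistent. The paper's proof simply asserts the final implication (``there are no $A,B$ with $\{\claim(\inst(A))\}\cup\support(\inst(B))\vdash\bot$, therefore $\Delta$ is consistent'') without constructing witnesses; your minimal-inconsistent-subset construction is exactly the natural way to discharge that step, and it is correct in the generic case $|\Phi|\geq 2$.

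On the degenerate case $\Phi=\{\alpha\}$ with $\alpha$ itself contradictory: you are right to flag it, but neither of your proposed workarounds succeeds. The first (that such formulae are ``implicitly excluded'') is not supported by anything in the definitions. The second (embedding into a two-element inconsistent set $\{\alpha,\beta\}$ and rerunning the construction) fails because $\{\alpha\}$ remains a minimal inconsistent subset of $\{\alpha,\beta\}$, so whichever element you remove you cannot form both required deductive arguments: removing $\alpha$ leaves $\{\beta\}$, which entails the tautology $\neg\alpha$ only via the empty support, and then $\langle\{\alpha\},\alpha\rangle$ is not a deductive argument; removing $\beta$ leaves the inconsistent $\{\alpha\}$. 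Note, however, that the paper's own proof does not address this case either, so this is a gap in the proposition (or an implicit assumption that $\Delta$ contains no self-inconsistent formula) rather than a defect specific to your argument.
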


\begin{proof}
Assume $\igraph$ exhausts $\Delta$.
Therefore, for all $I \in \args(\Delta)$,
there is an $A \in \nodes(\graph)$ s.t. $\inst(A) = I$.
Furthermore, $\igraph$ satisfies {\rm  INC1},
and for all arcs $(A,B) \in \arcs(\graph)$, $\lab(A,B) = +$.
Therefore, there are no $A,B \in \nodes(\Delta)$, 
such that $\{ \claim(\inst(A)) \} \cup \support(\inst(B)) \vdash \bot$.
Therefore, $\Delta$ is consistent. 
\end{proof}

Finally, for the SUP constraints, it is also straightforward to see that we have a hierarchy as captured in the following result. 

\begin{proposition}
(1) {\rm  SUP1} implies {\rm  SUP3} but the converse does not hold.
(2) {\rm  SUP2} implies {\rm  SUP1} but the converse does not hold.
(3) {\rm  SUP2} implies {\rm  SUP4} but the converse does not hold.
(4) {\rm  SUP3} implies {\rm  SUP5} but the converse does not hold.
(5) {\rm  SUP4} implies {\rm  SUP6} but the converse does not hold.
(6) {\rm  SUP4} implies {\rm  SUP3} but the converse does not hold.
(7) {\rm  SUP6} implies {\rm  SUP5} but the converse does not hold.
\end{proposition}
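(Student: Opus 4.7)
The plan is to verify each of the seven sub-claims in turn, reusing the same three idioms that were used for the POS-hierarchy proposition above: (a) a $+$ label entails a $\ne -$ label; (b) an identity-condition ``$\claim(\inst(A))$ is $\phi$'' triggers the conjunction-entailment condition ``$\claim(\inst(A)) \vdash \wedge\Gamma$'' by taking $\Gamma = \{\phi\}$; and (c) adjoining the side-premise $(A,B)\in\arcs(\graph)$ to the hypothesis can only restrict when the constraint fires. Converses will be refuted by small instantiated graphs that exhibit the relevant slack.

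Items (1) and (3) follow from (a) alone: SUP1 and SUP3 (respectively SUP2 and SUP4) share the same hypothesis, and the conclusion $\lab(A,B) = +$ strictly implies $\lab(A,B) \ne -$; the converses are refuted by labelling the relevant arc with $*$ in place of $+$. Items (2), (6) and (7) are all applications of (b). For (2), whenever SUP1's hypothesis holds with some $\phi$, set $\Gamma = \{\phi\}$; this triggers SUP2's hypothesis and forces $\lab(A,B) = +$, which is SUP1's conclusion. The same trick handles (6), triggering SUP4 from SUP3, and (7), triggering SUP6 from SUP5 while the arc-premise carries across unchanged. In each case the converse is refuted by the standard instance $\claim(\inst(A)) = a \wedge b$ with $\support(\inst(B)) = \{a,b\}$: $\Gamma = \{a,b\}$ witnesses the conjunction-entailment condition but no single $\phi \in \support(\inst(B))$ is identical to $a \wedge b$, so the weaker constraint is undisturbed by labelling the arc with $*$.

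Item (5) is an application of (c): SUP6's hypothesis is exactly SUP4's hypothesis with the conjunct $(A,B)\in\arcs(\graph)$ adjoined, so any graph satisfying SUP4 trivially satisfies SUP6. The converse is refuted by a two-argument instantiated graph in which the conjunction-entailment condition holds but no arc is drawn from $A$ to $B$: SUP6 is vacuously satisfied because its arc-premise fails, whereas SUP4, having no arc-premise, would force the pair to be labelled and labelled with something other than $-$.

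The main obstacle is item (4), $\text{SUP3} \Rightarrow \text{SUP5}$. Here SUP3's formula-hypothesis is strict identity with no arc-premise, while SUP5's is entailment conjoined with an arc-premise, so neither (b) nor (c) alone dispatches it. The plan is to read the conclusion $\lab(A,B) \ne -$ of SUP3 as forcing the arc $(A,B)$ to exist with label in $\{+,*\}$ (otherwise the conclusion is vacuous). Then whenever SUP3's identity-hypothesis fires, both components of SUP5's conjoined hypothesis are discharged — identity entails entailment, and SUP3 itself supplies the arc — and SUP3's conclusion $\lab(A,B) \ne -$ is exactly SUP5's. The converse is refuted by taking $\claim(\inst(A))$ identical to some formula in $\support(\inst(B))$ while placing no arc from $A$ to $B$: SUP5 is then vacuous because its arc-premise fails, while SUP3 is violated because it demands that the pair be labelled.
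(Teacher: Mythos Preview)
The paper's own proof is the single line ``Follows directly from the definitions'', so there is no detailed argument to compare against; your write-up is considerably more explicit. Items (1), (2), (3), and (5) are handled correctly, but there are genuine gaps in (4), (6), and (7).

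For (4), your forward argument proves the wrong implication. To establish $\text{SUP3}\Rightarrow\text{SUP5}$ you must show that \emph{every} pair $(A,B)$ satisfying SUP5's hypothesis has $\lab(A,B)\ne -$; instead you only show that pairs satisfying \emph{SUP3's} hypothesis do. Since SUP5's formula-condition is entailment while SUP3's is syntactic identity, there are pairs with $\claim(\inst(A))\vdash\phi$ for some $\phi\in\support(\inst(B))$ yet $\claim(\inst(A))$ not literally equal to any such $\phi$; SUP3 is silent about these, and your argument does not cover them. Concretely, take $\inst(A)=\langle\{a,b\},a\wedge b\rangle$, $\inst(B)=\langle\{a,c\},a\wedge c\rangle$, with a single arc $(A,B)$ labelled $-$: SUP3 is vacuously satisfied (no claim is literally a premise anywhere) but SUP5 fails, so the implication as stated is not even true.

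For (6), your counterexample labels the arc $*$, but then SUP4's conclusion $\lab(A,B)\ne -$ is satisfied and SUP4 holds; you have not separated SUP3 from SUP4. A separating instance needs the arc labelled $-$ (so that SUP4 fails) together with an instantiation in which SUP3's identity-hypothesis is vacuous throughout the graph.

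For (7), your counterexample treats SUP5's formula-condition as identity, but as written SUP5 uses entailment: ``$\claim(\inst(A))\vdash\phi$''. With $\claim(\inst(A))=a\wedge b$ and $\support(\inst(B))=\{a,b\}$ we have $a\wedge b\vdash a$, so SUP5's hypothesis fires too and nothing is separated. In fact, under the definitions as printed the two hypotheses are equivalent (entailing $\bigwedge\Gamma$ for some nonempty $\Gamma\subseteq\support(\inst(B))$ is equivalent to entailing some single $\phi\in\support(\inst(B))$), so SUP5 and SUP6 coincide and the ``converse does not hold'' part of (7) cannot be established from the text as it stands.
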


\begin{proof}
Follows directly from the definitions.
\end{proof}

\begin{proposition}
The set of constraints {\rm EQUIV}, {\rm NEG1} to {\rm NEG7}, 
{\rm  POS1} to {\rm  POS9},
{\rm  INC1} to {\rm  INC3}, 
and {\rm  SUP1} to {\rm  SUP6} are consistent together. 
\end{proposition}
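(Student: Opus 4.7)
The plan is to prove consistency by exhibiting a single instantiated bipolar argument graph $\igraph = (\graph, \lab, \inst)$ that satisfies every constraint in all five groups simultaneously. Since each listed constraint is a universally quantified conditional over pairs $(A, B) \in \nodes(\graph) \times \nodes(\graph)$ (or, in the case of EQUIV, over quadruples), a witness in which either the antecedents are uniformly vacuous or the corresponding arcs and labels are present by design will suffice.

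The cleanest witness is the empty graph: take $\nodes(\graph) = \arcs(\graph) = \emptyset$, with $\lab$ and $\inst$ the empty functions. This meets the definition of an instantiated bipolar argument graph by inspection, and every constraint in EQUIV, NEG1--NEG7, POS1--POS9, INC1--INC3, and SUP1--SUP6 holds vacuously because there are no nodes for the universal quantifiers to range over. This reduces the entire proof to two trivial observations about the empty function.

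If a non-trivial witness is preferred, I would take the single-node graph with $\inst(A) = \langle\{\qw{a}\},\qw{a}\rangle$ together with a positive self-loop $\lab(A,A) = +$. Then POS1 and POS2 reduce to $\{\qw{a}\} \not\vdash \bot$; POS3--POS5 reduce to $\qw{a} \vdash \qw{a}$ with $\support(\inst(A)) = \{\qw{a}\} \ne \emptyset$; POS6--POS9 and SUP1--SUP6 all have $C = A$ as the only candidate, and in each such case the required positive label on $(A,A)$ is already present; INC1--INC3 are vacuous because $\{\qw{a}\}$ is consistent; NEG1--NEG7 are vacuous because no arc is labelled $-$; and EQUIV is trivial because only one argument exists.

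The main obstacle, were one to insist on a witness with several distinct arguments and no self-loops, would be reconciling the arc-forcing constraints (SUP1, SUP2, POS6, POS7, INC1) with the label-restricting constraints (POS1--POS5, NEG1--NEG7). In classical logic any claim $\alpha$ satisfies $\alpha \vdash (\phi \to \alpha)$ for every $\phi$, so SUP2 in particular tends to demand a positive arc from $A$ back to itself, or to any $B$ whose premises include a conditional consequent-matching $\alpha$; conversely POS5 forces $\claim(\inst(A)) \vdash \bigwedge \support(\inst(B))$ whenever $\lab(A,B) = +$, which is a strong structural requirement. The empty and single-node witnesses both sidestep this tension, which is why they are the natural choices for a consistency proof.
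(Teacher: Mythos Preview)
Your proposal is correct. Both witnesses work: the empty graph makes every constraint vacuous, and for the single-node graph with $\inst(A)=\langle\{\qw{a}\},\qw{a}\rangle$ and $\lab(A,A)=+$, each of {\rm POS1}--{\rm POS9} and {\rm SUP1}--{\rm SUP6} reduces to $\{\qw{a}\}\not\vdash\bot$ or $\qw{a}\vdash\qw{a}$ together with the already-present label $\lab(A,A)=+$, while the {\rm NEG} and {\rm INC} families are vacuous and {\rm EQUIV} is trivial.

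The paper takes a different route: it exhibits a two-node graph with $\inst(\xa)=\langle\{\qw{a}\},\qw{a}\rangle$, $\inst(\xb)=\langle\{\neg\qw{a}\},\neg\qw{a}\rangle$, and mutual arcs labelled $-$. That witness exercises {\rm NEG1}--{\rm NEG3} and {\rm INC1} non-vacuously (there is a genuine conflict and the required negative arcs are present), whereas your witnesses leave those constraints vacuous. On the other hand, your single-node construction handles the arc-forcing constraints {\rm SUP1}, {\rm SUP2}, {\rm POS6}, {\rm POS7} more transparently: since $\claim(\inst(A))\in\support(\inst(A))$, these constraints demand a positively labelled self-arc, and you supply one explicitly. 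The paper's two-node witness is silent on the diagonal pairs $(\xa,\xa)$ and $(\xb,\xb)$, which under a reading that quantifies over all node pairs would also trigger {\rm SUP1}; your approach sidesteps that delicacy entirely. In short, the paper's example better illustrates that the constraints coexist in the presence of real attacks, while yours is the cleaner formal consistency argument.
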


\begin{proof}
To show consistency, consider the following bipolar argument graph which satisfies all the constraints. 
\begin{center}
\begin{tikzpicture}
[->,>=latex,thick, arg/.style={draw,text centered, 
shape=rectangle, rounded corners=6pt,
fill=gray!10,font=\footnotesize}]
\node[arg] (a) [text width=40mm] at (0,0) {$\inst(\xa) = \langle \tt \{ a \}, a \rangle $};
\node[arg] (b) [text width=30mm] at (5,0) {$\inst(\xb) = \langle \tt \{ \neg a \}, \neg a \rangle $};
\path	(b) edge[<->] node[above] {$-$} (a);
\end{tikzpicture}
\end{center}
\end{proof}

The above result shows that there are situations where all the properties can hold. However, there are also situations where some of the properties cannot hold together. For example, for many instantiations it is not possible for both {\rm  POS3} and POS6 to both hold or for both {\rm NEG4} and {\rm NEG5} to hold. This is particularly the case when all the deductive arguments in $\args(\Delta)$ appear in the instantiated bipolar argument graph for a given knowledgebase $\Delta$. 

\section{Discussion}
\label{section:discussion}

In this paper, we have proposed a framework for instantiating bipolar argument graphs with deductive arguments. This has included a set of constraints for delineating acceptable instantiations. The focus has been on instantiation with deductive arguments where the base logic is classical logic. But this could be adapted for alternative base logics (see \cite{Hunter2010,BesnardHunter2014,Besnard2018} for discussion of alternative base logics).  

In future work, we will consider specific proposals for instantiations of bipolar argument graphs, and specific labelling policies, and investigate the constraints that are satisfied. We will also consider definitions for comparing instantiations (e.g. for saying that two instantiations of the same bipolar argument graph are logically equivalent, or that one instantiation is more specific than another instantiation) and definitions for manipulating instantiations (e.g. combining or splitting nodes, and therefore their instantiations). This may involve establishing relationships with notions of similarity proposed for bipolar argumentation \cite{Budan2020}.

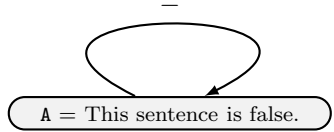
\begin{figure}
\begin{center}
\begin{tikzpicture}
[->,>=latex,thick, arg/.style={draw,text centered, 
shape=rectangle, rounded corners=6pt,
fill=gray!10,font=\footnotesize}]
\node[arg] (a) [text width=40mm] at (3,0) {$\xa$ = {\rm This sentence is false.}};
\draw (a) .. controls (0,1.5) and (6,1.5) .. (a) node[above,pos=0.5]{$-$};
\end{tikzpicture}
\end{center}
\caption{\label{ex:selfcycle}
Bipolar argument graph with a single self-attacking arc.
}
\end{figure}

Also in future work, we will consider generalizing the notion of an instantiation function so that it can assign approximate arguments to nodes. 
An {\bf approximate argument} is a pair $\langle\Phi,\alpha\rangle$ 
where $\Phi\subseteq{\cal L}$ and $\alpha \in {\cal L}$ \cite{Hunter2007}.
This is a very general definition. 
It does not assume that $\Phi$ is consistent, or that it even entails $\alpha$.
Consider Figure \ref{ex:selfcycle}, if we allow approximate arguments, we could for instance instantiate as follows.
\[
\inst(\xa) = \langle \tt \{ \neg a, \neg a \rightarrow a \}, a \rangle 
\]

By introducing the flexibility to instantiate bipolar argument graphs with approximate arguments, it will allow us to formalize enthymemes 
which are arguments where there are insufficient premises for entailing the claim and/or the claim is implicit or incomplete \cite{Black:2012}.
Advantages of instantiating bipolar argument graphs with enthymemes in the form of approximate arguments is that we can allow supporting arguments to provide implicit premises, and thereby avoid repetition of information that is presented to the user. 
Also, by using bipolar argument graphs to understand enthymemes, the uncertainty involved with the decoding can be quantified and managed (for example, by generalizing the constellation approach to probabilistic argumentation \cite{Hunter2014}).

Finally, in future work, we want to investigate how we can instantiate an argument in an bipolar argument graph with alternative arguments. This may allow us to have ways of reasoning directly with {\bf ambiguity} where there are different deductive arguments, perhaps with quite different non-logical symbols (i.e different symbols for propositions, predicates, and terms) reflecting quite different interpretations of the abstract argument, {\bf granularity} where an abstract argument can be instantiated with premises being a few propositional formulae or with premises being a large number of very complex predicate logic formulae, and {\bf veracity} where there may be different instantiations depending on whether the argument is regarded as acceptable and why (for example, for an argument, we may instantiate it differently when the argument is believed or disbelieved).


\newcommand{\etalchar}[1]{$^{#1}$}


\end{document}